\tikzset{cross/.style={cross out, draw=black, fill=none, minimum size=2*(#1-\pgflinewidth), inner sep=0pt, outer sep=0pt}, cross/.default={2pt}}
\newtheorem{theorem}{Theorem}
\newtheorem{lemma}[theorem]{Lemma}
\newtheorem{proposition}[theorem]{Proposition}
\newtheorem{definition}{Definition}
\DeclarePairedDelimiter{\ceil}{\lceil}{\rceil}
\DeclareMathOperator*{\argmax}{arg\,max}
\newcommand{\defemph}[1]{\textbf{#1}}
\newcommand{\mapping}[3]{#1 \with #2 \to #3}
\newcommand{\card}[1]{| #1 |}
\newcommand{\with}{\! : \,}
\newcommand{\given}{\, | \,}
\newcommand{\setupto}[1]{[#1]}
\newcommand{\hatp}{\ensuremath{\hat{p}}}
\newcommand{\hatpg}[4]{\ensuremath{\hat{p}_{\scriptscriptstyle {#4} | {#1}\text{=}{#2}}({#3})}}
\newcommand{\gen}[2]{\ensuremath{#1 \preceq #2}}
\newcommand{\homo}[2]{\ensuremath{#1 \precsim #2}}
\newcommand{\cA}{\mathcal{A}}
\newcommand{\cC}{\mathcal{C}}
\newcommand{\cI}{\mathcal{I}}
\newcommand{\cS}{\mathcal{S}}
\newcommand{\cT}{\mathcal{T}}
\newcommand{\cX}{\mathcal{X}}
\newcommand{\cZ}{\mathcal{Z}}
\newcommand{\cXp}{\cX^{+}}
\newcommand{\cXo}{\mathcal{X}^{*}}
\newcommand{\bd}{\mathbf{d}}
\newcommand{\cB}{\mathcal B}
\newcommand{\oest}{\bar{f}}
\newcommand{\rfn}{\ensuremath{\text{spc}}}
\newcommand{\crd}{\ensuremath{\text{mon}}}
\newcommand{\methodA}{\textbf{\textsc{\OPUS}}_{\rfn}\xspace}
\newcommand{\methodB}{\textbf{\textsc{\OPUS}}_{\crd}\xspace}
\newcommand{\methodC}{\textbf{\textsc{\BNB}}_{\crd}\xspace}
\newcommand{\methodD}{\textbf{\textsc{\BEAM}}_{\rfn}\xspace}
\newcommand{\methodE}{\textbf{\textsc{\BEAM}}\xspace}
\newcommand{\OPUS}{\textsc{Opus}}
\newcommand{\Opus}{\textsc{Opus}\xspace}
\newcommand{\BNB}{\textsc{BnB}}
\newcommand{\BEAM}{\textsc{Greedy}}
\newcommand{\fbarn}{\ensuremath{\bar{f}_{\text{spc}}}\xspace}
\newcommand{\fbaro}{\ensuremath{\bar{f}_{\text{mon}}}\xspace}
\newcommand{\bo}{\hat{b}_0\xspace}
\newcommand{\mo}{\hat{m}_o\xspace}
\newcommand{\X}{\mathcal{X}}
\newcommand{\vx}{\mathbf{x}}
\newcommand{\vc}{\mathbf{c}}
\newcommand{\NMI}{F\xspace}
\newcommand{\hNMI}{\ensuremath{\hat{F}}\xspace}
\newcommand{\hI}{\ensuremath{\hat{I}}\xspace}
\newcommand{\hH}{\ensuremath{\hat{H}}\xspace}
\newcommand{\ourScore}{\ensuremath{\hat{F}_{0}}}
\newcommand{\Io}{\hat{I}_0\xspace}
\newcommand{\D}{\mathbf{D}}
\newcommand{\domain}[1]{V(#1)}
\newcommand{\giturl}{\url{https://github.com/pmandros/fodiscovery}}
\def\BibTeX{{\rm B\kern-.05em{\sc i\kern-.025em b}\kern-.08em
		T\kern-.1667em\lower.7ex\hbox{E}\kern-.125emX}}
\begin{document}
	
	\title{Discovering Reliable Dependencies from Data: Hardness and Improved Algorithms}
	
\author{\IEEEauthorblockN{Panagiotis Mandros, Mario Boley, Jilles Vreeken}
	\IEEEauthorblockA{Max Planck Institute for Informatics and Saarland University\\
		Saarland Informatics Campus, Saarbr\"{u}cken, Germany\\
		\{pmandros,mboley,jilles\}@mpi-inf.mpg.de	
	}
}
	
	\maketitle
	
	\begin{abstract}
	The reliable fraction of information is an attractive score for quantifying (functional) dependencies in high-dimensional data. In this paper, we systematically explore the algorithmic implications of using this measure for optimization. We show that the problem is NP-hard, which justifies the usage of worst-case exponential-time as well as heuristic search methods. We then substantially improve the practical performance for both optimization styles by deriving a novel admissible bounding function that has an unbounded potential for additional pruning over the previously proposed one. Finally, we empirically investigate the approximation ratio of the greedy algorithm and show that it produces highly competitive results in a fraction of time needed for complete branch-and-bound style search. 
	\end{abstract}
	
\begin{IEEEkeywords}
	knowledge discovery, approximate functional dependency, information theory, optimization, branch-and-bound
\end{IEEEkeywords}

	\section{Introduction}
Given a data sample $\D_n=\{\bd_1,\dots,\bd_n\}$ drawn from the joint distribution $p$ of some input variables $\cI$ and an output variable $Y$,
it is a fundamental problem in data analysis to find variable subsets $\cX \subseteq \cI$ that jointly influence or (approximately) determine $Y$.
This \defemph{functional dependency discovery} problem, i.e., to find
\begin{equation}\label{eq:problem}
\argmax \{ Q(\cX;Y) : \cX \subseteq \cI\}
\end{equation}
for some real-valued measure $Q$ that assesses the dependence of $Y$ on $\cX$,
is a classic topic in the database community~\cite[Ch. 15]{ramakrishnan:2000:dbms}, but also has many other applications including feature selection \cite{song:2012:bahsic} and knowledge discovery~\cite{ziarko:2002:rough}. For instance, finding such dependencies can help identify compact sets of descriptors that capture the underlying structure and actuating mechanisms of complex scientific domains (e.g., \cite{ghiringhelli:2015:big,ouyang:2017:sisso}).

For categoric input and output variables, the measure $Q$ can be chosen to be the \defemph{fraction of information}~\cite{cavallo:1987:fraction, giannella:2004:fraction, reimherr:2013:fraction} defined as
\[
F(\cX;Y)=(H(Y)-H(Y\given \cX))/H(Y) \enspace,
\]
where $H(Y)=\sum_{y \in Y}p(y)\log p(y)$ denotes the \defemph{Shannon entropy}.
This score represents the relative reduction of uncertainty about $Y$ given $\cX$. It takes on values between $0$ and $1$ corresponding to independence and exact functional dependency, respectively. 

Estimating the score naively with empirical probabilities $\hatp$, however, leads to an overestimation of the actual dependence between $\cX$ and $Y$, a behavior known as \emph{dependency-by-chance}~\cite{romano:2016:chance}. In particular, since the bias is increasing with the domain size of variables \cite{roulston1999estimating}, it is unsuitable for dependence discovery where we have to soundly compare different variable sets of varying dimensionality and consequently of widely varying domain sizes (see Fig.~\ref{fig:cardVsInd}). 
In some feature selection approaches (see, e.g., \cite{guyon:2003:featsel}) this problem is mitigated by only considering dependencies of individual variables or pairs.
Alternatively, some algorithms from the database literature, e.g., \cite{huhtala:1999:tane,kruse:2018:afd}, neglect this issue by assuming a closed-world, i.e., the unknown data generation process $p$ is considered equal to the empirical $\hatp$~\cite{giannella:2004:fraction}.

\begin{figure}[t]

	\includegraphics[width=\linewidth]{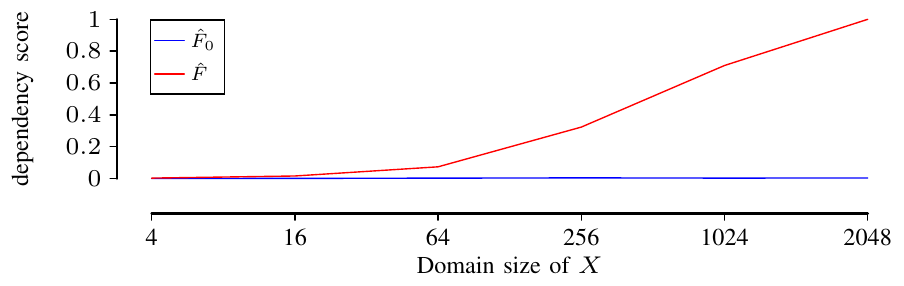}
	\caption{\textbf{Dependency-by-chance.} 
	Estimated fraction of information for variables $X$ of increasing domain size (4 to 2048) to independent $Y$ (domain size 4) for fixed sample size (1000).
	Estimated dependency increases for naive estimator $\hNMI$, while the corrected-for-chance estimator $\ourScore$ accurately estimates population value $\NMI(X;Y)=0$.}
	\label{fig:cardVsInd}
\end{figure}

Both of these approaches are infeasible in the statistical setting with arbitrary sized variable sets that we are interested in. Instead, here, the fraction of information can be corrected by subtracting its estimated expected value under the hypothesis of independence. This gives rise to the \defemph{reliable fraction of information}~\cite{mandros:2017:discovering,nguyen:2010:chancejournal} defined as
\begin{equation}
\label{eq:relfoi}
\ourScore(\cX;Y)= \hNMI(\cX;Y)-\hat{E}_0(\hNMI(\cX;Y)) \enspace,
\end{equation}
where $\hat{E}_0(\hNMI(\cX;Y))=\nicefrac{ \sum_{\sigma \in S_n} \hNMI(X;Y_\sigma)}{n!}$ is the expected value of $\hNMI$ under the \defemph{permutation model}~\cite[p. 214]{lancaster:1969:chi}, i.e., under the operation of permuting the empirical $Y$ values with a random permutation $\sigma \in S_n$.
This estimator can be computed efficiently in time $O(nk)$ for $\cX$ with domain size $k$ (see \cite{romano:2014:smi} and appendix). Moreover, the  maximization problem (Eq.~\eqref{eq:problem}) can be solved effectively by a simple branch-and-bound scheme: the maximally attainable $\ourScore$ for  supersets of some partial solution $\cX$ can be bounded by the function $\fbaro(\cX)=1-\hat{E}_0(\hNMI(\cX;Y))$, which follows from the \emph{monotonicity} of $\hat{E}_0(\hNMI(\,\cdot\,;Y))$~\cite{mandros:2017:discovering}.

This, however, is a rather simplistic bounding function that leaves room for substantial improvements. Moreover, it is unclear whether one has to rely on exponential-time worst-case branch-and-bound algorithms in the first place. Finally, the option of heuristic optimization has not yet been explored. 

To this end, this paper provides the following contributions:
\begin{enumerate}
\item We show that the problem of maximizing the reliable fraction of information is NP-hard. This justifies the usage of worst-case exponential-time algorithms as well as heuristic search methods (Sec.~\ref{sec:nphard}).
\item Motivated by this insight, we then greatly improve the practical performance for both of these optimization styles by deriving a novel admissible bounding function $\fbarn(\cX)$. This function is not only tighter than the previously proposed $\fbaro(\cX)$ but in particular we have that the supremum of $\fbaro(\cX)/\fbarn(\cX)$---and thus the potential for additional pruning in search---is unbounded (Sec.~\ref{sec:oest}).
\item Finally, we report extensive empirical results evaluating the proposed bounding function and the various algorithmic strategies.
In particular, we consider the approximation ratio of the greedy algorithm and show that in fact, it produces highly competitive results in a fraction of time needed for complete branch-and-bound style search---motivating further investigation of this fact (Sec.~\ref{sec:eval}). 
\end{enumerate}
We round up with a concluding discussion (Sec.~\ref{sec:conc}). Before presenting the main contributions, we recall reliable functional dependency discovery and prove some basic results (Sec.~\ref{sec:prelim}).
	\section{Reliable Dependency Discovery}
\label{sec:prelim}
Let us denote by $\setupto{n}$ the set of positive integers up to $n$. The symbols $\log$ and $\ln$ refer to the logarithms of base $2$ and $e$, respectively. We assume a set of discrete random variables $\cA=\cI \cup \{Y\}$ is given along with an empirical sample $\D_n=\{\bd_1,\dots,\bd_n\}$ of their joint distribution.
For a variable $X$ we denote its domain, called \defemph{categories} (or distinct values), by $\domain{X}$ but we also write $x \in X$ instead of $x \in \domain{X}$ whenever clear from the context.
We identify a random variable $X$ with the \defemph{labeling} $\mapping{X}{\setupto{n}}{\domain{X}}$ it induces on the data sample, i.e., $X(i)=\bd_i(X)$. 
Moreover, for a set $\cS=\{S_1,\dots,S_l\}$ of labelings over $\setupto{n}$, we define the corresponding vector-valued labeling by
$
\cS(i)=(S_{1}(i), \dots, S_{l}(i)) 
$.
With $X_{\mathcal{Q}}$ for a subset $\mathcal{Q} \subseteq \setupto{n}$, we denote the map $X$ restricted to domain $\mathcal{Q}$.

We define $\mapping{c_{\scriptscriptstyle X}}{\domain{X}}{\mathbb{Z}_+}$ to be the \defemph{empirical counts} of $X$, i.e., $c_{\scriptscriptstyle X}(x)=\card{\{i \in \setupto{n} \with X(i)=x \}}$. We further denote with $\hat{p}_{\scriptscriptstyle X} \with \domain{X} \rightarrow [0,1]$, where $\hat{p}_{\scriptscriptstyle X}(x)=\nicefrac{c_{\scriptscriptstyle X}(x)}{n}$, the \defemph{empirical distribution} of $X$. Given another random variable $Z$, $\hat{p}_{\scriptscriptstyle Z \given {X}\text{=}{x}} \with \domain{Z} \rightarrow [0,1]$ is the \defemph{empirical conditional distribution} of $Z$ given $X=x$, with $\hatpg{X}{x}{z}{Z}=\nicefrac{c_{ \scriptscriptstyle X \cup Z}(x, z)}{c_{\scriptscriptstyle X}(x)}$ for $z \in Z$. However,  we use $\hatp(x)$ and $\hatp(z \given x)$ respectively whenever clear from the context. These empirical probabilities give rise to the \defemph{empirical conditional entropy} $\hH(Y \given X)=\sum_{x \in X}\hat{p}(x)\hH(Y \given X=x)$, the \defemph{empirical mutual information} $\hI(X;Y)=\hH(Y)-\hH(Y \given X)$, and the \defemph{empirical fraction of information} $\hNMI(X;Y)=\hI(X;Y)/\hH(Y)$. 

Recall that the reliable fraction of information is defined as the empirical fraction of information $\hNMI(X;Y)$ minus its expected value under the permutation model $\hat{E}_0(\hNMI(X;Y))$  
where $\hat{E}_0(\hNMI(X;Y))=\sum_{\sigma \in S_n}\hNMI(X;Y_\sigma)/n!$. Here, $S_n$ denotes the \defemph{symmetric group} of $\setupto{n}$, i.e., the set of bijections from $\setupto{n}$ to $\setupto{n}$, and $A_\sigma$ denotes the composition of a map $A$ with the permutation $\sigma \in S_n$, i.e.,  $A_\sigma(\cdot)=A(\sigma(\cdot))$. We abbreviate the \defemph{correction term} $\hat{E}_0(\hNMI(X;Y))$ as $\hat{b}_0(X,Y,n)$ and the unnormalized version as $\hat{m}_0(X,Y,n)=\hat{b}_0(X,Y,n)\hH(Y)$.

\subsection{Specializations and Labeling Homomorphisms}
Since we identified sets of random variables with their corresponding sample-index-to-value map, they are subject to the following general relations of maps with common domains.
\newcommand{\labeq}{\equiv}
\begin{definition}
Let $A$ and $B$ be maps defined on a common domain $N$. We say that $A$ is \defemph{equivalent} to $B$, denoted as $A \equiv B$, if for all $i,j \in N$ it holds that $A(i)=A(j)$ if and only if $B(i) = B(j)$. 
We say that $B$ is a \defemph{specialization} of $A$, denoted as $A \preceq B$, if for all $i,j \in N$ with $A(i) \neq A(j)$ it holds that $B(i) \neq B(j)$. 
\end{definition}
\noindent  A special case of specializations is given by the subset relation of variable sets, e.g., if $\cX \subseteq \cX' \subseteq \cI$ then $\gen{\cX}{\cX'}$. 
The specialization relation implies some important properties for empirical probabilities and information-theoretic quantities.
 \begin{proposition}\label{prop:spec}
 	Given variables $X$, $Z$ and $Y$, with $X \preceq Z$, the following statements hold:
 	\begin{enumerate}[label=\alph*)]
 		\item there is a projection $\mapping{\pi}{\domain{Z}}{\domain{X}}$, s.t. for all $x \in \domain{X}$, it holds that $\hatp_{\scriptscriptstyle X}(x)=\sum_{z \in \pi^{-1}(x)} \hatp_{\scriptscriptstyle Z}(z)$,
 		\item $\hH(X) \leq \hH(Z)$
 		\item $\hH(Y \given Z) \leq \hH(Y \given X) $,
 		\item\label{spec:mutinf} $\hI(X;Y) \leq \hI(Z;Y)$,
 	\end{enumerate}
 \end{proposition}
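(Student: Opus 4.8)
The plan is to establish part (a) by a direct combinatorial argument and then read off parts (b)--(d) from it using elementary properties of Shannon entropy. Throughout I would assume without loss of generality that every category of $X$ and of $Z$ occurs at least once in the sample, since categories with zero empirical count may be removed from $\domain{X}$ and $\domain{Z}$ without affecting any of the counts, probabilities, or entropies involved.

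For part (a), I would unfold the specialization hypothesis in its contrapositive form: $\gen{X}{Z}$ says precisely that $Z(i)=Z(j)$ implies $X(i)=X(j)$, i.e., the partition of $\setupto{n}$ into level sets of $Z$ refines that of $X$. Hence for each $z\in\domain{Z}$ all indices $i$ with $Z(i)=z$ carry one and the same value $X(i)\in\domain{X}$, which I take as $\proj{z}$; this defines a (surjective, by the reduction above) map $\mapping{\pi}{\domain{Z}}{\domain{X}}$. The level set $\{i : X(i)=x\}$ is then the disjoint union of the level sets $\{i : Z(i)=z\}$ over $z\in\projm{x}$, so $c_{X}(x)=\sum_{z\in\projm{x}}c_{Z}(z)$, and dividing by $n$ gives the claim.

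For part (b), using (a) I note that $\hatp_{Z}(z)\le\hatp_{X}(x)$ whenever $\proj{z}=x$, so $-\log\hatp_{Z}(z)\ge-\log\hatp_{X}(x)$, giving
\[
-\hatp_{X}(x)\log\hatp_{X}(x)=\sum_{z\in\projm{x}}\hatp_{Z}(z)\bigl(-\log\hatp_{X}(x)\bigr)\le\sum_{z\in\projm{x}}\hatp_{Z}(z)\bigl(-\log\hatp_{Z}(z)\bigr),
\]
and summing over $x\in\domain{X}$ proves $\hH(X)\le\hH(Z)$; equivalently, the joint labeling $(X,Z)$ is equivalent to $Z$, so the chain rule gives $\hH(Z)=\hH(X,Z)=\hH(X)+\hH(Z\given X)\ge\hH(X)$. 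For part (c), (a) shows that the empirical conditional distribution $\hatp(\,\cdot\given X{=}x)$ is the convex combination $\sum_{z\in\projm{x}}\bigl(c_{Z}(z)/c_{X}(x)\bigr)\,\hatp(\,\cdot\given Z{=}z)$, with weights summing to one; concavity of Shannon entropy then yields $\hH(Y\given X{=}x)\ge\sum_{z\in\projm{x}}\bigl(c_{Z}(z)/c_{X}(x)\bigr)\hH(Y\given Z{=}z)$, and multiplying by $\hatp_{X}(x)$ and summing over $x$ gives $\hH(Y\given X)\ge\hH(Y\given Z)$ (equivalently, $\hH(Y\given X)\ge\hH(Y\given X,Z)=\hH(Y\given Z)$ since conditioning cannot increase entropy and $(X,Z)\equiv Z$). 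Part (d) is then immediate: $\hI(X;Y)=\hH(Y)-\hH(Y\given X)\le\hH(Y)-\hH(Y\given Z)=\hI(Z;Y)$.

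The result is a standard refinement-monotonicity statement, so the difficulty is not conceptual; the step I would be most careful about is the bookkeeping in part (a) --- the reduction to categories that actually occur, and checking that $\pi$ is well defined exactly because of the specialization hypothesis --- since every subsequent claim is a one-line consequence of (a) together with the elementary facts that $t\mapsto-t\log t$ is superadditive under grouping and that entropy is concave in the distribution.
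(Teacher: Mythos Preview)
Your proof is correct and proceeds along essentially the same lines as the paper's: part (a) by unwinding the specialization definition, part (b) by the termwise inequality $-\hatp_X(x)\log\hatp_X(x)\le\sum_{z\in\projm{x}}-\hatp_Z(z)\log\hatp_Z(z)$, and part (d) immediately from (c). The only packaging difference is in (c): the paper invokes the log-sum inequality directly on $\sum_{z,y}q(z,y)\log\frac{q(z,y)}{q(z)}$, whereas you phrase the same step as concavity of entropy applied to the mixture $\hatp(\,\cdot\,\given X{=}x)=\sum_{z\in\projm{x}}\frac{c_Z(z)}{c_X(x)}\,\hatp(\,\cdot\,\given Z{=}z)$; your alternative one-liners via the chain rule (for (b)) and ``conditioning does not increase entropy'' together with $(X,Z)\equiv Z$ (for (c)) are cleaner than the paper's explicit computations and worth keeping.
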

 \begin{proof} 
 	Let us denote with $p$ and $q$ the $\hatp_{\scriptscriptstyle X \cup Y}$ and $\hatp_{\scriptscriptstyle Z \cup Y}$ distributions respectively.
 	Statement a) follows from the definition. For b), we define $h(x) = -p(x) \log p(x) $ for $x \in X$, and similarly $h(z)$ for $z \in Z$. We show that for all $x \in X$,  $h(x) \leq \sum_{z \in \pi^{-1}(x)}h(z)$.  The statement then follows from the definition of $\hH$. We have
 	 \begin{align*}
 	h(x) &= -p(x) \log p(x) \\
 	 	&=-\left ( \sum_{z \in \pi^{-1}(x)}q(z) \right ) \log \left ( \sum_{z \in \pi^{-1}(x)}q(z) \right ) \\
 	&=-\sum_{z \in \pi^{-1}(x)} \left ( q(z)  \log \left(\sum_{s \in \pi^{-1}(x)}q(s) \right)\right)\\
 	&\leq -\sum_{z \in \pi^{-1}(x)} q(z)  \log q(z) = \sum_{z \in \pi^{-1}(x)}h(z) \enspace,
 	\end{align*}
 	where the inequality follows from the monotonicity of the $\log$ function (and the fact that $q(z)$ is positive for all $z \in Z$).\\
 	c) Let us first recall the log-sum inequality~\cite[p.~31]{cover}: for non-negative numbers $a_1,a_2, \dots, a_n$ and $b_1, b_2, \dots, b_n$, 
 	\begin{equation}\label{eq:logsum}
 	\sum_{i=1}^{n}a_i \log\frac{a_i}{b_i} \geq  	\Big(\sum_{i=1}^{n}a_i \Big) \frac{	\sum_{i=1}^{n}a_i}{	\sum_{i=1}^{n}b_i}
 	\end{equation}
 	with equality if and only if $a_i/b_i$ constant. We have:
 	\begin{align*}
 	\hH(Y \given Z) = & -\sum_{z \in Z, y \in Y} q(z,y) \log \frac{q(z,y)}{q(z)} \\
   	\stackrel{(a)}{=}& -\sum_{x \in X, y \in Y} \sum_{z \in \pi^{-1}(x)}  q(z,y) \log \frac{q(z,y)}{q(z)} \\
 	\stackrel{\eqref{eq:logsum}}{\leq} & -\sum_{x \in X, y \in Y} \Big( \sum_{z \in \pi^{-1}(x)}  q(z,y)  \Big) \frac{\displaystyle \sum_{z \in \pi^{-1}(x)} q(z,y) }{\displaystyle \sum_{z \in \pi^{-1}(x)}  q( z) }  \\
 	= & -\sum_{x \in X, y \in Y} p(x,y) \log \frac{p(x,y)}{p(x)} = \hH(Y \given X)
 	\end{align*}
 	d) We have $\hI(Z;Y)=\hH(Y)-\hH(Y \given Z)$ $ \leq$ $ \hH(Y)-\hH(Y \given X)=\hI(X;Y)$ following from (c). 
 \end{proof}

In order to analyze monotonicity properties of the permutation model, the following additional definition is useful.
\begin{definition}
We call a labeling $X$ \defemph{homomorphic} to a labeling $Z$ (w.r.t. the target variable $Y$), denoted as $\homo{X}{Z}$, if there exists $\sigma \in S_n$ with $Y \equiv Y_\sigma$ such that $X \preceq Z_\sigma$.
\end{definition}
See Tab.~\ref{tab:homoiso} for examples of both introduced relations.
Importantly, the inequality of mutual information for specializations (Prop.~\ref{prop:spec}\ref{spec:mutinf} carries over to homomorphic variables and in turn to their correction terms. 

\begin{proposition}\label{prop:isospec}
	Given variables $X$, $Z$ and $Y$, with $\homo{X}{Z}$, the following statements hold:
	\begin{enumerate}[label=\alph*)]
		\item  $\hI(X;Y) \leq \hI(Z;Y)$ \label{prop:isospec1}
		\item $\mo(X,Y,n) \leq \mo(Z,Y,n)$ \label{isospec:correction}
	\end{enumerate}
\end{proposition}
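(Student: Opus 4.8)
The plan is to reduce both statements to Proposition~\ref{prop:spec}\ref{spec:mutinf} by combining it with two elementary invariances of empirical mutual information, which I would establish first directly from the definitions: (i) $\hI(A_\pi;B_\pi)=\hI(A;B)$ for any $\pi \in S_n$, because $(A_\pi(i),B_\pi(i))=(A(\pi(i)),B(\pi(i)))$ and, as $i$ ranges over $\setupto{n}$, so does $\pi(i)$, hence the joint empirical distribution of the pair is unchanged; and (ii) $\hI(A;B)=\hI(A;B')$ whenever $B \equiv B'$, since equivalent labelings induce the same partition of $\setupto{n}$ and therefore differ only by a bijective renaming of categories of the second argument, which does not affect entropies. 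Both facts are one-line arguments.

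For part a), I would unfold the definition of $\homo{X}{Z}$: there is $\tau \in S_n$ with $Y \equiv Y_\tau$ and $X \preceq Z_\tau$. Then I chain $\hI(X;Y) = \hI(X;Y_\tau)$ by (ii), next $\hI(X;Y_\tau) \leq \hI(Z_\tau;Y_\tau)$ by Proposition~\ref{prop:spec}\ref{spec:mutinf} applied to the specialization $X \preceq Z_\tau$ (with target $Y_\tau$), and finally $\hI(Z_\tau;Y_\tau)=\hI(Z;Y)$ by (i) with $\pi=\tau$. This yields $\hI(X;Y)\leq\hI(Z;Y)$.

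For part b), I would first observe that $\hH(Y_\sigma)=\hH(Y)$ for every $\sigma \in S_n$, so that $\mo(X,Y,n)=\tfrac{1}{n!}\sum_{\sigma \in S_n}\hI(X;Y_\sigma)$, and likewise for $Z$; hence it suffices to compare these two sums. Fixing a $\tau$ with $X \preceq Z_\tau$ (again from $\homo{X}{Z}$), for each $\sigma$ Proposition~\ref{prop:spec}\ref{spec:mutinf} gives $\hI(X;Y_\sigma)\leq\hI(Z_\tau;Y_\sigma)$, and a change of summation index $j=\tau(i)$ in the joint counts shows $\hI(Z_\tau;Y_\sigma)=\hI(Z;Y_{\sigma\circ\tau^{-1}})$. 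Summing over $\sigma \in S_n$ and using that $\sigma \mapsto \sigma\circ\tau^{-1}$ is a bijection of $S_n$ (with inverse $\sigma\mapsto\sigma\circ\tau$) gives $\sum_\sigma\hI(X;Y_\sigma)\leq\sum_\sigma\hI(Z;Y_\sigma)$, i.e.\ $\mo(X,Y,n)\leq\mo(Z,Y,n)$.

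The only genuinely delicate part is the bookkeeping underlying invariance (i) and the index substitution in part b): getting the composition order of $\sigma$ and $\tau$ right and verifying that the reindexing map on $S_n$ is a bijection. Everything else is a mechanical chain of (in)equalities. It is worth noting that part b) uses only the existence of some $\tau$ with $X \preceq Z_\tau$, whereas the additional requirement $Y \equiv Y_\tau$ in the definition of $\homo{X}{Z}$ is precisely what part a) needs.
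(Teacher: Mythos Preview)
Your proof is correct and follows essentially the same route as the paper. The paper also fixes a witnessing permutation $\sigma^*$ with $Y\equiv Y_{\sigma^*}$ and $X\preceq Z_{\sigma^*}$, uses the same two invariances (joint permutation invariance and invariance under relabeling an argument by an equivalent one) together with Proposition~\ref{prop:spec}\ref{spec:mutinf}, and for part b) performs the same reindexing of the sum over $S_n$ via composition with the fixed permutation; the only cosmetic difference is that the paper permutes the first argument (writing $\mo$ as $\tfrac{1}{n!}\sum_\sigma \hI(Z_\sigma;Y)$) while you permute the second, which amounts to the bijection $\sigma\mapsto\sigma^{-1}$. Your closing observation that part b) does not actually use $Y\equiv Y_\tau$ is correct and is implicit in the paper's argument as well.
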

\begin{proof}
	Let $\sigma^* \in S_n$ be a permutation for which $Y \labeq Y_{\sigma^*}$ and  $X \preceq Z_{\sigma^*} $. Property a) follows from
	\begin{equation*}
	\hI(Z;Y)= \hI(Z_{\sigma^*};Y_{\sigma^*}) =  \hI(Z_{\sigma^*};Y)\geq \hI(X;Y) \enspace,
	\end{equation*}
	where the inequality holds from Prop.~\ref{prop:spec}\ref{spec:mutinf}.
	For b), note that for every $\sigma \in S_n$, it holds from Prop.~\ref{prop:spec}\ref{spec:mutinf} that $\hI(Z_{ \sigma \circ \sigma^*};Y) \geq \hI(X_{\sigma};Y)$. Hence
	\begin{align*}
\mo(Z,Y,n)=&\frac{1}{n!} \sum_{\sigma \in S_n} \hI(Z_{\sigma};Y)  \\
=& \frac{1}{n!} \sum_{\sigma \in S_n} \hI(Z_{\sigma \circ \sigma^*};Y) \\ 
\geq& \frac{1}{n!}\sum_{\sigma \in S_n} \hI(X_\sigma;Y) =\mo(X,Y,n)
	\end{align*}
\end{proof}

\begin{table}[]
	\centering
	\begin{tabular}{ccccc}
		\toprule
		$X_1$ & $X_2$ & $X_3$ & $X_4$ & $Y$ \\  \midrule
		a   & a     & a     & b     & a   \\
		a     & b    & b    & a     & b   \\
		b     & c     & b     & b     & b   \\
		b     & c    & c     & c     & b  \\ \bottomrule
	\end{tabular}
	\caption{\textbf{Specialization and homomorphism examples}. We have $X_1 \preceq X_2$, $\homo{X_1}{X_2}$, $\homo{X_1}{X_3}$, $\homo{X_1}{X_4}$, $\homo{X_2}{X_3}$. Note that  $ X_3 \not \precsim X_4$ as there is no $\sigma \in S_4$ that satisfies specialization w.r.t. $X_{4}$~and~$Y \equiv Y_\sigma$}
	\label{tab:homoiso}
\end{table}

\begin{algorithm}[th]
	\caption{
		$\OPUS$: Given a set of input variables $\cI$, function $f$, bounding function $\oest$, and $\alpha \in (0,1]$,  the algorithm returns the $\cX^* \subseteq \cI$ satisfying $f(\cX^*) \geq \alpha \max \{f(\cX') \with \cX' \subseteq \cI \}$
	}
	\label{alg:opus}
	\begin{algorithmic}[1]
		\Function{$\OPUS$}{$\mathbf{Q},\cS$} 
		\If{$\mathbf{Q}$ is empty}
		\State \textbf{return} $\cS$
		\Else
		\State $(\cX, \cZ)=pop(\mathbf{Q})$
		\State $\mathbf{R}=\{ (\cX \cup \{Z\},Z) \with Z \in \cZ \}$
		\State $\cX^* = \argmax    \{f(\cX') \with \cX' \in \mathbf{R} \cup \{\cS\} \} $
		\State $\mathbf{R}'=\{ (\cX',Z) \in \mathbf{R}  \with \alpha \oest(\cX') > f(\cX^*)   \}  $
		\State $\cZ'=\{Z \with (\cX',Z) \in \mathbf{R}'\}$
		\State $[(\cX_1,Z_1), \dots, (\cX_k, Z_k)]=sort(\mathbf{R}')$
		\State $\mathbf{Q}'=\mathbf{Q} \cup \{ (\cX_i, \cZ' \setminus \{Z_1, \dots, Z_i\} ) \with i \in \setupto{k}     )  \} $
		\State \textbf{return} $\OPUS(\mathbf{Q}',\cX^*) $
		\EndIf
		\EndFunction
		\State $\cX^*=\OPUS(\{  (\emptyset, \cI)  \},\emptyset) $
	\end{algorithmic}
\end{algorithm}
\subsection{Search Algorithms}
Effective algorithms for maximizing the reliable fraction of information over all subsets $\cX \subseteq \cI$ are enabled by the concept of bounding functions.
A function $\oest$ is called an \defemph{admissible bounding function} for an optimization function $f$ if for all candidate solutions $\cX \subseteq \cI$, it holds that $\oest (\cX) \geq f(\cX')$ for all $\cX'$ with $\cX \subseteq \cX' \subseteq \cI$. 
Such functions allow to prune all supersets $\cX'$ of $\cX$ whenever $\oest(\cX) \leq f(\cX^*)$ for the current best solution $\cX^*$ found during the optimization process. 

\defemph{Branch-and-bound}, as the name suggests, 
combines this concept with a branching scheme that completely (and non-redundantly) enumerates the search space $2^\cI$.
Here, we consider \defemph{optimized pruning for unordered search} (\defemph{\Opus}), an advanced variant of branch-and-bound that effectively propagates pruning information to siblings in the search tree~\cite{webb:1995:opus}. 
Algorithm~\ref{alg:opus} shows the details of this approach.

In addition to keeping track of the best solution $\cX^*$ seen so far, the algorithm maintains a priority queue $\mathbf{Q}$ of pairs $(\cX,\cZ)$, where $\cX \subseteq \cI$ is a candidate solution and $\cZ \subseteq \cI$ constitutes the variables that can still be used to augment $\cX$, e.g., $\cX'=\cX \cup \{Z\}$ for a $Z \in \cZ$. The top element is the one with the smallest cardinality and the highest potential (a combination of breadth-first and best-first order). 
Starting with $\mathbf{Q}=\{(\emptyset, \cI)\}$, $\cX^*=\emptyset$, and a desired approximation guarantee $\alpha \in (0,1]$, in every iteration \Opus  creates all refinements of the top element of $\mathbf{Q}$ and updates $\cX^*$ accordingly (lines 5-7). Next the refinements are pruned using $\oest$ and $\alpha$ (line 8).  Following, the pruned list is sorted according to decreasing potential (this is a heuristic that propagates the most augmentation elements to the least promising refinements~\cite{webb:1995:opus}), the possible augmentation elements $\mathcal{Z}'$ are non-redundantly propagated to the refinements of the top element, and finally the priority queue is updated with the new candidates (lines 9-11). 

A commonly used alternative to complete branch-and-bound search for the optimization of dependency measures is the standard \defemph{greedy algorithm} (see \cite{guyon:2003:featsel, brown:2012:featsel}).
This algorithm only refines the best candidate in a given iteration. 
Moreover, bounding functions can be incorporated as an early termination criterion.
For the reliable fraction of information in particular, there is potential to prune many of the higher levels of the search space as it favors solutions that are small in cardinality~\cite{mandros:2017:discovering}. 
The algorithm is presented in Algorithm~\ref{alg:greedy}.


The algorithm keeps track of the best solution $\cX^*$ seen, as well as the best candidate for refinement $\cC^*$. Starting with $\cX^*=\emptyset$ and $\cC^*=\emptyset$, the algorithm in each iteration checks whether $\cC^*$ can be refined further, i.e., if $\cI \setminus \cC^*$ is not empty, or if $\cC^*$ has potential (the early termination criterion). If not, the algorithm terminates returning $\cX^*$ (lines 2-3). Otherwise $\cC^*$ is refined to all possible refinements, and the best one is selected as a candidate to update $\cX^*$ (lines 5-7).

Concerning the approximation ratio of the greedy algorithm, there exists a large amount of research focused on submodular and/or monotone functions, e.g.,~\cite{feige:2011:maximizing,das:2011:subratio,bian:2017:approximate}. However, we note that $\ourScore$ is neither submodular nor monotone, and hence these results are not directly applicable. To demonstrate empirically the quality of the results, we perform an evaluation in Sec.~\ref{sec:greedyeval}. We discuss further on this topic in Sec.~\ref{sec:conc}. 

\begin{algorithm}[t]
	\caption{
		$\BEAM$: Given a set of input variables $\cI$, function $f$, and bounding function $\oest$,  the algorithm returns the $\cX^* \subseteq \cI$ approximating $f(\cX^*)= \max \{f(\cX') \with \cX' \subseteq \cI \}$
	}
	\label{alg:greedy}
	\begin{algorithmic}[1]
		\Function{$\BEAM$}{$\cC,\cS$} 
		\If{$ \cI \setminus \cC$ is empty or $\oest(\cC) \leq f(\mathcal{\cS})$}
		\State \textbf{return} $\cS$
		\Else
		\State $\mathbf{R}=\{ \mathcal{C} \cup \{Z\} \with Z \in \cI \setminus \cC \}$
		\State $\cC^* = \argmax \{ f(\cX') \with \cX' \in \mathbf{R} \}  $
		\State $\cX^*=\argmax  \{f(\cX') \with \cX' \in \{\cS,\cC^*\} \} $
		\State \textbf{return} $\BEAM(\cC^*,\cX^*) $
		\EndIf
		\EndFunction
		\State $\cX^*=\BEAM( \emptyset ,\emptyset) $
	\end{algorithmic}
\end{algorithm}

\section{Hardness of optimization}\label{sec:nphard}

In this section, we show that the problem of maximizing $\ourScore$ is
NP-hard by providing a reduction from the well-known NP-hard
\defemph{minimum set cover} problem: given a finite universe
$U=\{u_1,\dots,u_n\}$ and collection of subsets
$\mathcal{B}=\{B_1,\dots,B_m\} \subseteq 2^U$, find a set cover, i.e.,
a sub-collection $\cC \subseteq \cB$ with $\bigcup \cC=U$, that is of minimal
cardinality.

The reduction consists of two parts. First, we construct a base
transformation $\tau_1(U,\cB)=\mathbf{D}_{l}$ that maps a set
cover instance to a dataset $\mathbf{D}_{l}$ such that set covers
correspond to attribute sets with an empirical fraction of information score $\hNMI$ of
$1$ and bias correction terms $\hat{b}_0$ that are a monotonically
decreasing function of their cardinality. In a second step, we then
calibrate the $\hat{b}_0$ terms such
that, when considering the corrected score $\ourScore$, they cannot
change the order between attribute sets with different $\hNMI$ values
but only act as a tie-breaker between attribute sets of equal
$\hNMI$ value. This is achieved by copying the dataset $\mathbf{D}_l$
a suitable number of times $k$ such that the correction terms are
sufficiently small but the overall transformation, denoted
$\tau_{k}(U,\cB)=\mathbf{D}_{kl}$, is still of polynomial size.

\begin{figure}[t]
	\centering
	\begin{minipage}[c]{0.49\columnwidth}
		\includegraphics[width=\linewidth]{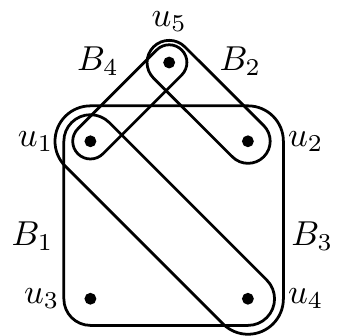}
	\end{minipage}
	\begin{minipage}[t]{0.49\columnwidth}
		\centering
		\footnotesize 
		\setlength{\tabcolsep}{3pt}
		
		\begin{tabular}{llccccr}
			\toprule
			
			& & $\mathbf{X_1}$ & $\mathbf{X_2}$ & $X_3$ & $X_4$ & $Y$ \\ \midrule
			& 1       & \textbf{1} & \textbf{a}     & 1     & 1     & a   \\
			& 2       & \textbf{a} & \textbf{2}     & 2     & a     & a   \\
			$S_1$	& 3       & \textbf{3} & \textbf{a}     & a     & a     & a   \\
			& 4       & \textbf{4} & \textbf{a}     & 4     & a     & a   \\
			& 5       & \textbf{a} & \textbf{5}     & a     & 5     & a   \\ \midrule
			& 6   & \textbf{a}     & \textbf{a}     & a     & a     & b   \\
			& 7   & \textbf{a}     & \textbf{a}     & a     & a     & b   \\
			$S_2$	& 8   & \textbf{a}     & \textbf{a}     & a     & a     & b   \\
			& 9   & \textbf{a}     & \textbf{a}     & a     & a     & b   \\
			& 10   & \textbf{a}     & \textbf{a}     & a     & a     & b   \\ \midrule
			& 11 & \textbf{b}     & \textbf{c}     & c     & c     & c   \\
			& 12 & \textbf{c}     & \textbf{b}     & c     & c     & c   \\
			$S_3$	& 13 & \textbf{c}     & \textbf{c}     & b     & c     & c   \\
			& 14 & \textbf{c}     & \textbf{c}     & c     & b     & c   \\ 
			& 15 & \textbf{c}     & \textbf{c}     & c     & c     & c   \\ 	\bottomrule
		\end{tabular}
	\end{minipage}
	\caption{\textbf{Base transformation example}. \textbf{Left:}  a set 
		cover instance $U=\{u_1,\hdots, u_5\}$ and $\cB=\{\mathbf{B_1},\mathbf{B_2},B_3,B_4\}$. \textbf{Right:} the resulting $\D_{15}$ using $\tau_1(U,\cB)$ (bold indicates the set cover) }
	\label{fig:initreduction}
\end{figure}
The \defemph{base transformation} $\tau_1(U,\cB)=\mathbf{D}_{l}$ is defined as
follows. 
The dataset $\mathbf{D}_{l}$ contains $m$
descriptive attributes $\cI=\{X_1, \dots, X_m\}$ corresponding to the
sets of the set cover instance, and a target variable Y. The sample size is $l=2n+m+1$ with a
logical partition of the sample into the three regions $S_1=[1,n]$,
$S_2=[n+1,2n]$, and $S_3=[2n+1,l]$.
The target attribute $Y$ assigns to sample points one of three values
corresponding to the three regions, i.e.,
$\mapping{Y}{\setupto{l}}{\{\text{a},\text{b},\text{c}\}}$ with
\begin{align*}
Y(j) &= \begin{cases}
\text{a} ,  & j \in S_1 \\
\text{b} , &  j \in S_2 \\
\text{c} , & j \in S_3 
\end{cases}\\
\intertext{and the descriptive attributes $X_i$ assign up to $n+3$
  distinct values dependending on the set of universe elements covered
  by set $B_i$, i.e., $X_i \with \setupto{l} \rightarrow \{1,2,\dots, n, \text{a}, \text{b}, \text{c}\}$ with}
X_{i}(j) &= \begin{cases}
j ,  &  j \in S_1 \wedge u_j \in B_i \\
\text{a} ,  &  (j \in S_1 \wedge u_j \not \in B_i ) \vee j \in S_2 \\
\text{b} , &  j=2n+i  \\
\text{c} , & j \in S_3 \setminus \{2n+i\} 
\end{cases} .
\end{align*}
See Fig.~\ref{fig:initreduction} for an illustration.
This transformation establishes a one-to-one correspondence of partial 
set covers $\cC \subseteq \cB$ and variable sets $\cX \subseteq \cI$, which we denote as $\cX(\cC)$. 
Let us denote $(\text{a},\hdots,\text{a})$ as $\vec{\text{a}}$.
The first part of the construction ($S_1$ and $S_2$) couples the amount of uncovered elements $U \setminus \bigcup \cC$ to the conditional entropy of $Y$ given $\cX(\cC)=\vec{\text{a}}$ through $\hat{p}(Y=\text{a}\given \cX(\cC)=\vec{\text{a}})=\card{U \setminus \bigcup \cC}/(n+\card{U \setminus \bigcup \cC})$.
The second part ($S_3$) links the size of $\cC$ to the number of distinct values on $S_3$.

We can note the following central properties.
\begin{lemma}\label{lem:t1}
Let $\tau_1(U,\cB)=\D_l$ be the transformation of a set cover instance $(U,\cB)$ and 
$\cC, \cC' \subseteq \cB$ be two partial set covers. Then the following holds.
\begin{enumerate}[label=\alph*)]
\item\label{t1:cover} If $\card{\bigcup \cC} \geq \card{\bigcup \cC'}$ then $\hNMI(\cX(\cC);Y) \geq \hNMI(\cX(\cC');Y)$;
in particular, $\cC$ is a set cover, i.e., $\bigcup \cC=U$, if and only if $\hNMI(\cX(\cC);Y)=1$,
\item \label{t1:mindiff} If $C$ is a set cover and $C'$ is \emph{not} a set cover then $\hI(\cX(\cC);Y)-\hI(\cX(\cC');Y) \geq 2/l$.
\item\label{t1:card} If $\cC$ and $\cC'$ are both set covers then $\homo{\cX(\cC)}{\cX(\cC')}$ 
if and only if $\card{\cC} \leq \card{\cC'}$.
\end{enumerate}
\end{lemma}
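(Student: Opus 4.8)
The plan is to first pin down the block partition that a variable set $\cX(\cC)$ induces on $\setupto{l}$, for an arbitrary partial cover $\cC$, and then read off all three claims from it. Writing $r=\card{U\setminus\bigcup\cC}$, one checks directly from the definition of $\tau_1$ that for nonempty $\cC$ the blocks of $\cX(\cC)$ are: (i) one ``mixed'' block of value $(\text{a},\dots,\text{a})$ consisting of all of $S_2$ together with the $r$ indices $j\in S_1$ having $u_j\notin\bigcup\cC$---on which $Y$ takes value $\text{b}$ at $n$ points and $\text{a}$ at $r$ points; (ii) the remaining $n-r$ indices of $S_1$, each a singleton with a unique value $j$; (iii) for each $B_i\in\cC$ the singleton $\{2n+i\}\subseteq S_3$; (iv) one block of value $(\text{c},\dots,\text{c})$ consisting of the $m-\card{\cC}+1$ indices $2n+i\in S_3$ with $B_i\notin\cC$ together with $2n+m+1$. (The degenerate $\cC=\emptyset$ is the constant labeling, handled separately.) Every block except (i) lies inside a single $Y$-region, so $\hH(Y\given\cX(\cC))$ equals the contribution of the mixed block, i.e. $g(r):=\frac1l\big((n+r)\log(n+r)-r\log r-n\log n\big)$, a function of $\cC$ only through $r$; and $\card{\domain{\cX(\cC)}}=(n-r)+\card{\cC}+2$, which for a set cover ($r=0$) is $n+\card{\cC}+2$.

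For part a), $g(0)=0$ and $g'(r)=\frac1l\log\frac{n+r}{r}>0$ for $r\ge1$, so $r\mapsto g(r)$ is strictly increasing on $\{0,1,\dots,n\}$; hence $\hNMI(\cX(\cC);Y)=1-g(r)/\hH(Y)$ is non-increasing in $r=n-\card{\bigcup\cC}$, which gives the monotonicity statement, and equals $1$ iff $g(r)=0$ iff $r=0$ iff $\cC$ is a set cover. For part b), if $\cC$ is a set cover then $\cX(\cC)$ determines $Y$, so $\hI(\cX(\cC);Y)=\hH(Y)$; if $\cC'$ is not a set cover then $r'\ge1$, hence $\hI(\cX(\cC);Y)-\hI(\cX(\cC');Y)=\hH(Y\given\cX(\cC'))=g(r')\ge g(1)=\frac1l\big((n+1)\log(n+1)-n\log n\big)$. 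It then remains to observe that $\phi(x):=(x+1)\log(x+1)-x\log x$ has $\phi'(x)=\log\tfrac{x+1}{x}>0$ and $\phi(1)=2$, so $\phi(n)\ge2$ and the gap is $\ge 2/l$. (For $\cC'=\emptyset$ one instead gets difference $\hH(Y)$, and $l\,\hH(Y)\ge 2n\log(l/n)>2n\ge2$ since $l=2n+m+1>2n$.)

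For part c), assume $\cC,\cC'$ are both set covers, so $\card{\domain{\cX(\cC)}}=n+\card{\cC}+2$ and likewise for $\cC'$. Only if: $\homo{\cX(\cC)}{\cX(\cC')}$ yields $\sigma$ with $Y\equiv Y_\sigma$ and $\cX(\cC)\preceq\cX(\cC')_\sigma$; composing with the bijection $\sigma$ leaves the number of distinct values unchanged, while a specialization cannot decrease it, so $n+\card{\cC'}+2=\card{\domain{\cX(\cC')_\sigma}}\ge\card{\domain{\cX(\cC)}}=n+\card{\cC}+2$, i.e. $\card{\cC}\le\card{\cC'}$. If: given $\card{\cC}\le\card{\cC'}$, pick $\sigma$ fixing each of $S_1,S_2,S_3$ setwise (so $Y\equiv Y_\sigma$ trivially) such that $\sigma^{-1}$ maps the block of value $(\text{c},\dots,\text{c})$ of $\cX(\cC')$ (size $m-\card{\cC'}+1$) into the block of value $(\text{c},\dots,\text{c})$ of $\cX(\cC)$ (size $m-\card{\cC}+1$, which is no smaller)---possible precisely because the former is the smaller. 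Then every block of $\cX(\cC')_\sigma$---the $S_1$-singletons, the $S_3$-singletons, the block $S_2$, and, by construction, the $(\text{c},\dots,\text{c})$-block---is contained in a block of $\cX(\cC)$, so $\cX(\cC)\preceq\cX(\cC')_\sigma$ and hence $\homo{\cX(\cC)}{\cX(\cC')}$.

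I expect the main obstacle to be the very first step: getting (i)--(iv) exactly right, in particular correctly tracking which indices of $S_1$ and $S_3$ stay singletons versus merge into the value-$(\text{a},\dots,\text{a})$ or value-$(\text{c},\dots,\text{c})$ blocks, the role of the extra sample point $2n+m+1$, and the degenerate collections $\cC=\emptyset$ and $\cC=\cB$. Once the partition is nailed down, a) and b) are short (modulo the tight numeric check $\phi(n)\ge 2$, which is an equality at $n=1$), and c) reduces to a counting argument plus one explicitly constructed permutation.
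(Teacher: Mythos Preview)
Your proposal is correct and follows essentially the same route as the paper: analyze the block structure of $\cX(\cC)$ induced by $\tau_1$, observe that only the $\vec{\text{a}}$-block contributes to $\hH(Y\mid\cX(\cC))$, reduce part~b) to the case $r'=1$ via monotonicity, and for part~c) construct a permutation on $S_3$ that slides the smaller $\vec{\text{c}}$-block into the larger one. Your write-up is simply more explicit than the paper's (which dispatches a) with ``follows from the definition'' and asserts the $\geq 2/l$ bound without the $\phi(n)\geq\phi(1)=2$ step). One genuine addition: you prove the ``only if'' direction of c) by counting distinct values, whereas the paper's proof only establishes the ``if'' direction; your argument via $\card{\domain{\cX(\cC')_\sigma}}=\card{\domain{\cX(\cC')}}\geq\card{\domain{\cX(\cC)}}$ is the natural way to close that gap.
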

\begin{proof}
Statement \ref{t1:cover} follows from the definition of $\tau_1$.

To show \ref{t1:mindiff}, since $\hNMI(\cX(\cC');Y)$ and thus $\hI(\cX(\cC');Y)$
are monotone in $\card{\bigcup \cC'}$, it is sufficient to consider the case where
$\card{U \setminus \bigcup \cC'}=1$.
In this case we have
\[
\hI(\cX(\cC);Y)\!-\!\hI(\cX(\cC');Y)=\hH(Y \given \cX(\cC')) - \underbrace{\hH(Y \given \cX(\cC))}_{=0}\\
\]
and, moreover, as required
\begin{align*}
\hH(Y \given \cX(\cC')) &=-\hat{p}(\vec{\text{a}},\text{a})\log\hat{p}(\text{a}\given \vec{\text{a}})-\hat{p}(\vec{\text{a}},\text{b})\log\hat{p}(\text{b} \given \vec{\text{a}})\\
&=-\frac{1}{l}\log \left(\frac{1}{n+1} \right)-\frac{n}{l}\log\left(\frac{n}{n+1}\right) \geq \frac{2}{l} \enspace .
\end{align*}

For \ref{t1:card} observe that for a variable set $\cX=\cX(\cC)$ corresponding to
a set cover $\cC$, we have for all $i,j \in S_1$ that $\cX(i) \neq \cX(j)$. Thus, 
$\cX_{S_1} \equiv \cX'_{S_1}$ for a variable set $\cX'=\cX(\cC')$ corresponding to
another set cover $\cC'$. Moreover, we trivially have $\cX_{S_2} \equiv \cX'_{S_2}$.
Finally, let $Q,Q' \subseteq S_3$ denote the indices belonging to $S_3$ where $\cX$ and
$\cX'$ take on values different from $(c,\dots,c)$. Note that all values in these sets
are unique. Furthermore, if $\card{\cC} \leq \card{\cC'}$ then $\card{Q} \leq \card{Q'}$
and in turn $\card{Q \setminus Q'} \leq \card{Q' \setminus Q}$.
This means we can find a permutation $\sigma \in S_n$ such that for all $i \in Q \setminus Q'$
it holds that $\sigma(i)=j$ with $j \in  Q' \setminus Q$ and $\sigma(i)=i$ for $i \not\in Q \cap Q'$ (that is $\sigma$ permutes all indices of non-$(c,\dots,c)$ values of $\cC$ in $S_3$ to indices of non-$(c,\dots,c)$ values of $\cC'$).
For such a permutation it holds that $Y_\sigma\equiv Y$ and $\cX_{S_3} \preceq \cX'_{S_3 \sigma}$.
Therefore, $\homo{\cX}{\cX'}$ as required.
\end{proof}

Now, although set covers $\cC \subseteq \cB$ correspond to variable sets $\cX$ with the maximal empirical fraction of information
value of $1$, due to the correction term, it can happen that $\ourScore(\cX';Y)>\ourScore(\cX;Y)$
for a variable set $\cX'$ corresponding to a partial set cover.
To prevent this we make use of the following upper bound of the expected mutual information
under the permutation model.
\begin{proposition}[\cite{nguyen:2010:chancejournal}, Thm.~7]
\label{prop:upperbound}
For a sample of size $n$ of the joint distribution of variables $A$ and $B$ having 
$a$ and $b$ distinct values, respectively, we have
\[
\hat{m}_0(A,B,n) \leq \log\left(\frac{n+ab-a-b}{n-1}\right)
\]
\end{proposition}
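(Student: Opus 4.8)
The plan is to bound $\hat{m}_0(A,B,n)$ by first replacing the permuted mutual information with a concave function of Pearson's mean‑square contingency, and then exploiting that this quantity has an \emph{exact} first moment under the permutation model. Throughout, write $a_i$ ($i=1,\dots,a$) and $b_j$ ($j=1,\dots,b$) for the empirical counts of the distinct values of $A$ and $B$, so $\sum_i a_i=\sum_j b_j=n$; for $\sigma\in S_n$ let $n_{ij}(\sigma)$ be the joint count of the $i$‑th value of $A$ together with the $j$‑th value of $B_\sigma$. The margins of this $a\times b$ table are $a_i$ and $b_j$ for every $\sigma$, and only the cell counts depend on $\sigma$.

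First, for a fixed $\sigma$ I would apply Jensen's inequality (concavity of $\log$) to $\hI(A;B_\sigma)=\sum_{i,j}\hatp(i,j)\log\frac{\hatp(i,j)}{\hatp(i)\hatp(j)}$ for the joint table of $A$ and $B_\sigma$, treating $\{\hatp(i,j)\}_{i,j}$ as the averaging distribution (cells with $\hatp(i,j)=0$ are absorbed by $0\log 0=0$ and drop out). This gives $\hI(A;B_\sigma)\le\log\sum_{i,j}\hatp(i,j)^2/(\hatp(i)\hatp(j))$, and the routine expansion of the square, together with $\sum_{i,j}(\hatp(i,j)-\hatp(i)\hatp(j))=0$, turns the right‑hand side into $\log(1+\hat\Phi^2(\sigma))$ where $\hat\Phi^2(\sigma)=\sum_{i,j}(\hatp(i,j)-\hatp(i)\hatp(j))^2/(\hatp(i)\hatp(j))\ge 0$. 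Since $\hH(B_\sigma)=\hH(B)$, we have $\hat{m}_0(A,B,n)=\frac{1}{n!}\sum_{\sigma\in S_n}\hI(A;B_\sigma)$, so combining the pointwise bound with a \emph{second} application of Jensen (concavity of $\log$ over the uniform average on $S_n$) yields $\hat{m}_0(A,B,n)\le\log\!\big(1+\tfrac{1}{n!}\sum_{\sigma}\hat\Phi^2(\sigma)\big)$.

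The remaining task is the exact identity $\frac{1}{n!}\sum_{\sigma}\hat\Phi^2(\sigma)=\frac{(a-1)(b-1)}{n-1}$. Rewriting the $(i,j)$ cell, $\hat\Phi^2(\sigma)=\sum_{i,j}(n_{ij}(\sigma)-a_ib_j/n)^2/(a_ib_j)$. Under a uniformly random $\sigma$, $n_{ij}$ is hypergeometric — assign the $b_j$ copies of the $j$‑th $B$‑value to a uniform $b_j$‑subset of the $n$ positions and count how many land among the $a_i$ positions carrying the $i$‑th $A$‑value — with mean $a_ib_j/n$ and variance $\frac{a_ib_j(n-a_i)(n-b_j)}{n^2(n-1)}$. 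Averaging over $\sigma$ replaces $(n_{ij}-a_ib_j/n)^2$ by this variance; the $a_ib_j$ factors cancel, the double sum factorizes, and $\sum_i(n-a_i)=n(a-1)$, $\sum_j(n-b_j)=n(b-1)$ collapse it to $\frac{(a-1)(b-1)}{n-1}$. Substituting and simplifying $1+\frac{(a-1)(b-1)}{n-1}=\frac{n+ab-a-b}{n-1}$ completes the proof.

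Everything here is elementary, so I do not anticipate a genuine obstacle; the only points that need a little care are the bookkeeping in the two Jensen steps — checking that the averaging weights form genuine probability vectors and that empty cells are covered by the $0\log 0=0$ convention — and recognizing the permutation‑model law of each contingency cell as \emph{exactly} hypergeometric, which is what makes its variance (hence the whole average) available in closed form. In essence this is a repackaging of the classical fact that under the null the expected Pearson $\chi^2$ statistic equals $\tfrac{n}{n-1}(a-1)(b-1)$, together with the concavity of $\log$.
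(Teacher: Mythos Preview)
The paper does not supply its own proof of this proposition; it is quoted verbatim as Theorem~7 of Vinh, Epps and Bailey~\cite{nguyen:2010:chancejournal} and used as a black box in the subsequent hardness argument. Your proposal is therefore not to be compared against anything in the present paper, but it is a faithful reconstruction of the original argument: bound mutual information by $\log(1+\hat\Phi^2)$ via Jensen, average over permutations with a second Jensen step, and evaluate $E_0[\hat\Phi^2]$ exactly from the hypergeometric variance of each cell. All steps check out, including the algebra $1+\tfrac{(a-1)(b-1)}{n-1}=\tfrac{n+ab-a-b}{n-1}$, and the caveats you flag (zero cells, that the averaging weights are genuine probabilities, that the cell law is exactly hypergeometric under the permutation model) are the right ones and cause no trouble.
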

This result implies that we can arbitrarily shrink the correction terms if we increase
the sample size but leave the number of distinct values constant.
Thus, we define the \defemph{extended transformation} $\tau_i(U,\cB)=\D_{il}$ through simply
copying $\D_l$ a number of $i$ times, i.e., by defining
$\bd_j=\bd_{(j \mod l)}$ for $j \in [l+1,il]$.
With this definition we show our main result.
\begin{theorem}
Given a sample of the joint distribution of variables $\cI$ and $Y$, the problem of
maximizing $\ourScore(\,\cdot\,;Y)$ over all subsets of $\cI$ is NP-hard.
\end{theorem}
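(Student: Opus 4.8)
The plan is to complete the reduction from minimum set cover using the extended transformation $\tau_k(U,\cB) = \D_{kl}$, and to argue that for a suitable polynomial choice of $k$, an optimal solution of the $\ourScore$-maximization problem on $\D_{kl}$ recovers a minimum-cardinality set cover. First I would observe that copying $\D_l$ a total of $k$ times does not change any empirical probability $\hatp$, hence leaves $\hNMI(\cX(\cC);Y)$ unchanged for every partial set cover $\cC$; in particular by Lemma~\ref{lem:t1}\ref{t1:cover} the variable sets with $\hNMI=1$ are still exactly those corresponding to set covers, and by Lemma~\ref{lem:t1}\ref{t1:mindiff} any non-cover variable set has $\hI$, and therefore $\hNMI = \hI/\hH(Y)$, smaller by at least $2/(kl\,\hH(Y)) \ge 2/(kl\log 3)$ than that of a cover. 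The number of distinct values of each $X_i$ and of $Y$ is unchanged (at most $n+3$ and $3$ respectively), so by Proposition~\ref{prop:upperbound} the correction term satisfies $\mo(\cX(\cC),Y,kl) \le \log\frac{kl + (n+3)^{\card{\cX(\cC)}}\cdot 3 - \dots}{kl-1}$, which tends to $0$ as $k \to \infty$ while $l,n,m$ stay fixed; concretely it is $O\!\big(\frac{(n+3)^m}{kl}\big)$ in the worst case, so choosing $k$ polynomially large — say $k$ of order $(n+3)^m l \log 3$, which is still polynomial in the size of the set cover instance since $m \le$ input size and the instance size dominates — makes every correction term $\bo(\cX(\cC),Y,kl) = \mo/\hH(Y)$ strictly smaller than $2/(kl\log 3)$, i.e.\ smaller than the minimum $\hNMI$-gap between covers and non-covers.

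Next I would use this separation to show the order structure. For two partial set covers $\cC,\cC'$: if $\hNMI(\cX(\cC);Y) > \hNMI(\cX(\cC');Y)$ then the gap is at least $2/(kl\log 3)$, which by the previous paragraph strictly exceeds the largest possible value of the subtracted correction term, so $\ourScore(\cX(\cC);Y) > \ourScore(\cX(\cC');Y)$; that is, correction cannot reverse a strict $\hNMI$-inequality. Consequently the $\ourScore$-maximizers lie among the variable sets of maximal $\hNMI$ value, which is $1$, i.e.\ among the set covers. Among set covers, $\hNMI = 1$ is constant, so $\ourScore(\cX(\cC);Y) = 1 - \bo(\cX(\cC),Y,kl)$, and maximizing $\ourScore$ is equivalent to \emph{minimizing} the correction term $\bo$ over set covers. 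By Proposition~\ref{prop:isospec}\ref{isospec:correction} combined with Lemma~\ref{lem:t1}\ref{t1:card}, whenever $\cC,\cC'$ are both set covers with $\card{\cC} \le \card{\cC'}$ we have $\homo{\cX(\cC)}{\cX(\cC')}$ and hence $\mo(\cX(\cC),Y,kl) \le \mo(\cX(\cC'),Y,kl)$, i.e.\ $\bo$ is monotone nondecreasing in $\card{\cC}$ over set covers. Therefore a set cover $\cC$ of minimum cardinality yields a minimum correction term, hence a maximum $\ourScore$ value.

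To finish, I would note that an optimal solution $\cX^*$ to the $\ourScore$-maximization problem on $\D_{kl}$ must be of the form $\cX(\cC^*)$ for some set cover $\cC^*$ (it cannot be a variable set not of this form, since every $\cX \subseteq \cI$ equals $\cX(\cC)$ for the corresponding $\cC \subseteq \cB$ via the one-to-one correspondence, and we argued the maximizer is a cover), and that $\cC^*$ has minimum cardinality among all set covers. Thus solving the maximization problem yields the optimum of the minimum set cover instance in polynomial time, establishing NP-hardness. Technically one should also address ties: if several set covers share the minimum cardinality they all attain the same optimal $\ourScore$ value, which is fine — any of them is a valid minimum set cover — and no strictly smaller cover can do better by monotonicity.

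The main obstacle I expect is the bookkeeping in the second paragraph: one must verify that the polynomial bound on $k$ simultaneously (i) keeps $\tau_k(U,\cB)$ of size polynomial in $\card{U}+\card{\cB}$, and (ii) drives \emph{every} correction term $\bo(\cX(\cC),Y,kl)$ strictly below the $\hNMI$-gap $2/(kl\log 3)$ — note the gap itself shrinks with $k$, so the argument must be that the correction term (bounded via Proposition~\ref{prop:upperbound} by something like $\log(1 + c/(kl))\approx c/(kl\ln 2)$ with $c = (n+3)^m\cdot 3$) shrinks \emph{at the same rate} $1/(kl)$ but with a constant one can beat by a further polynomial factor in $k$. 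Making the constants line up, while keeping the dependence on $(n+3)^m$ from blowing up the instance size (which it does not, since that quantity enters only the sample-size multiplier $k$, contributing $\log k$ to bit-length), is the delicate part; everything else follows from the lemmas already proved.
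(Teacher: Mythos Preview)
Your overall architecture matches the paper's, but there is a genuine gap in the calibration step that breaks the polynomial reduction.

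The problem is your bound on the number of distinct values of $\cX(\cC)$. You use the product of the individual domain sizes, $(n+3)^{\card{\cC}} \le (n+3)^m$, which is exponential in $m$. Feeding this into Proposition~\ref{prop:upperbound} forces $k$ to be of order $(n+3)^m$, and your claim that this ``enters only as $\log k$ in the bit-length'' is incorrect: the transformed dataset $\D_{kl}$ has $kl$ rows with $m+1$ entries each, so its size is $\Theta(klm)$---genuinely exponential in $m$ when $k\sim(n+3)^m$. The reduction would then not be polynomial.

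The fix is simple but essential: the number of \emph{empirical} distinct values of $\cX(\cC)$ in $\D_{kl}$ is at most $l$, the number of rows of the base dataset $\D_l$, since each distinct value must occur in at least one row and copying introduces no new values. With this linear bound and $\card{\domain{Y}}=3$, Proposition~\ref{prop:upperbound} gives $\hat m_0(\cX(\cC),Y,kl) \le \log\!\bigl(\tfrac{kl + O(l)}{kl-1}\bigr) = O(1/k)$, so $k\in O(l)$ already suffices to force $\hat m_0 < 2/l$, and the total dataset size is $O(l^2 m)$---polynomial. This is exactly what the paper does.

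A second, smaller error compounds the first: you write the $\hNMI$-gap between covers and non-covers as $2/(kl\log 3)$, i.e., shrinking with $k$. It does not. Copying $\D_l$ leaves all empirical probabilities $\hatp$ unchanged, so by Lemma~\ref{lem:t1}\ref{t1:mindiff} the $\hI$-gap remains at least $2/l$ regardless of $k$. This is precisely what makes the argument work: the gap is fixed while the correction term shrinks as $k$ grows, so you only need $k$ large enough to drive $\hat m_0$ below the \emph{fixed} threshold $2/l$---you are not racing two quantities that decay at the same rate.
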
 
\begin{proof}
We show that there is a number $k \in O(l)$ such that w.r.t. transformation $\tau_k$
we have that for all set covers $\cC \subseteq \cB$ and their corresponding
variable sets $\cX=\cX(\cC)$, $\hat{m}_0(\cX,Y,n) < 2/l$. Since all properties of Lemma~\ref{lem:t1}
transfer from $\tau_1$ to $\tau_k$, this implies that for all variable sets $\cX'=\cX(\cC')$ 
corresponding to non-set-covers $\cC'\subseteq \cB$, it holds that
\begin{align*}
\ourScore(\X;Y) &= \hNMI(\cX;Y) - \hat{m}_0(\cX,Y,n)/\hH(Y) \\
&> \hNMI(\cX;Y) - 2/l\hH(Y)\\
&\geq \hNMI(\cX;Y) - (\hI(\cX;Y)-\hI(\cX';Y))/\hH(Y)\\
&= \hNMI(\cX';Y) \geq \ourScore(\X';Y)
\end{align*}
where the greater-than follows from Lm.~\ref{lem:t1}\ref{t1:cover} and \ref{lem:t1}\ref{t1:mindiff}.
Thus, any $\cX$ with maximum $\ourScore$ corresponds to a set cover $\cC$.

Moreover, we know that $\cC$ must be a minimum set cover as required, because for a smaller
set cover $\cC'$, $\homo{\cX(\cC') }{\cX(\cC)}$ by Lemma~\ref{lem:t1}\ref{t1:card}
and thus $\hat{b}_0(\cX(\cC'),Y,n) \leq \hat{b}_0(\cX(\cC),Y,n)$ from Prop.~\ref{prop:isospec}\ref{isospec:correction}---therefore, $\cX(\cC)$ would not maximize $\ourScore$.

To find the number $k$ that defines the final transformation $\tau_k$,
let $\D_{il}=\tau_{i}(U,\cB)$ and $\cC$ be a set cover of $(U,\cB)$.
Since $\cX=\cX(\cC)$ has at most $3l$ distinct values in $\D_{il}$ and $Y$
exactly $3$, from Prop.~\ref{prop:upperbound} and the monotonicity of $\ln$ we know that
\[
\ln(2)\hat{m}_0(\cX(\cC),Y,n) \leq \ln \left(\frac{il+3l}{il-1}\right) \leq \ln \left(\frac{i+3}{i-1}\right) \leq \frac{4}{i-1}
\]
where the last inequality follows from $\ln(x)\leq x-1$.
Thus, for $k=\left\lceil 2l/\ln 2 \right\rceil+1 \in O(l)$ we have $\hat{m}_0(\cX,Y,n) < 2/l$
as required. The proof is concluded by noting that the final transformation $\tau_k(U,\cB)$
is of size $O(l^2m)$ (where $l=2n+m+1$), which is polynomial in the size of the set cover
instance.
\end{proof}

	\section{Refined bounding function}\label{sec:oest}

The NP-hardness established in the previous section excludes (unless P=NP) the existence of a polynomial time algorithm for maximizing the reliable fraction of information, leaving therefore exact but exponential search and heuristics as the two options. For both, and particularly the former, reducing the search space can lead to more effective algorithms. For this purpose, we derive in this section a novel bounding function for $\ourScore$ to be used for pruning.


\begin{figure*}[t]
	\centering
	\begin{minipage}[b]{\columnwidth}
		\centering
			\includegraphics[width=\linewidth]{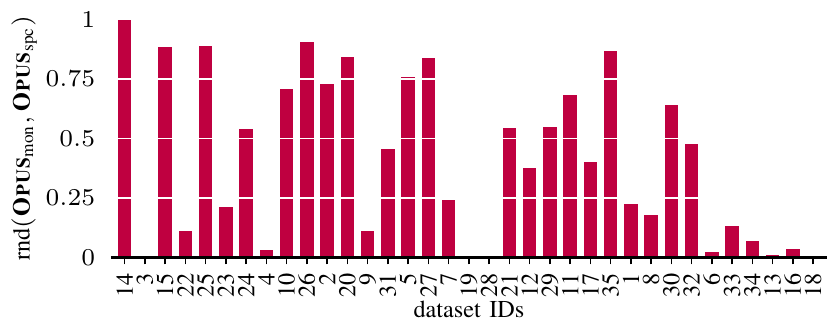}
		\label{fig:oestn} 
	\end{minipage}
	\begin{minipage}[b]{\columnwidth}
		\centering
		\includegraphics[width=\linewidth]{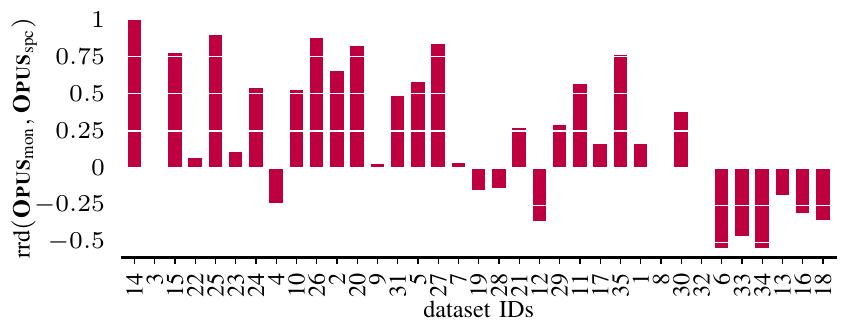}
	\end{minipage}
	\caption{\textbf{Evaluating $\fbarn$ for branch-and-bound optimization.} Relative nodes explored difference (left) and relative runtime difference (right) between methods $\methodA$ and $\methodB$. Positive (negative) numbers indicate that $\methodA$ ($\methodB$) is proportionally ``better". The datasets are sorted in decreasing number of attributes.}
	\label{fig:oesto}
\end{figure*}

Recall that an admissible bounding function $\oest$ for effective search is an upper bound to the optimization function value $f$ of all supersets of a candidate solution $\cX \subseteq \cI$.
That is, it must hold that $\oest (\cX) \geq f(\cX')$ for all $\cX'$ with $\cX \subseteq \cX' \subseteq \cI$.
At the same time, in order to yield optimal pruning, the bound should be as tight as possible.
Thus, the ideal function is
\begin{equation}
\bar{f}_{\text{ideal}}(\cX)=\max \{\ourScore(\cX';Y) \with \cX \subseteq \cX' \subseteq \cI \}  \enspace.
\end{equation}
Computing this function is of course equivalent to the original optimization problem and hence NP-hard. We can, however, relax the maximum over all supersets to the maximum over all \emph{specializations} of $\cX$.
That is, we define a bounding function $\fbarn(\cX)$ through 
\begin{align}
\fbarn(\cX) =& \max\{\ourScore(\cX';Y) \with \cX \preceq \cX'\} \\ 
\geq & \max  \{\ourScore(\cX';Y) \with \cX \subseteq \cX' \subseteq \cI\} 
= \bar{f}_{\text{ideal}}(\cX) \enspace.
\end{align}

While this definition obviously constitutes an admissible bounding function, it is unclear how it can be efficiently evaluated.
Let us denote by $R^+$ the operation of joining a labeling $R$ with the target attribute $Y$,  i.e., $R^+=\{R\} \cup \{Y\}$ (see Tab.~\ref{tab:xp} for an example).
This definition gives rise to a simple constructive form for computing $\fbarn$.
\begin{theorem}\label{lem:xy}
	The function $\fbarn$ can be efficiently computed as $\fbarn(\cX) = \ourScore(\cXp;Y)$ in time $O(n\card{\domain{\cX}}\card{\domain{Y}})$.
\end{theorem}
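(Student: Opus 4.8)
The plan is to prove the identity $\fbarn(\cX) = \ourScore(\cXp;Y)$ by two separate inequalities and then read off the running time from the size of $\cXp$. The direction $\fbarn(\cX) \ge \ourScore(\cXp;Y)$ is immediate: if $\cX(i)\neq\cX(j)$ then $\cXp(i)=(\cX(i),Y(i))\neq(\cX(j),Y(j))=\cXp(j)$, so $\cX\preceq\cXp$ and $\cXp$ is a feasible point of the maximization defining $\fbarn$ (which is well-defined and attained, since up to equivalence only the finitely many partitions of $\setupto{n}$ refining $\cX$ are relevant). All of the content is in the reverse inequality $\ourScore(\cX';Y)\le\ourScore(\cXp;Y)$ for \emph{every} labeling $\cX'$ with $\cX\preceq\cX'$.

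I would prove this by passing from an arbitrary such $\cX'$ to $\cXp$ through the intermediate labeling $\cZ=\{\cX'\}\cup\{Y\}$, using two observations. First, since $\hH(Y_\sigma)=\hH(Y)$ for all $\sigma$, the correction term can be written as $\hat{m}_0(A,Y,n)=\frac{1}{n!}\sum_{\sigma\in S_n}\hI(A;Y_\sigma)$; combining the chain rule $\hI(\cZ;Y_\sigma)=\hI(\cX';Y_\sigma)+\hI(Y;Y_\sigma\given\cX')$ with $\hI(Y;Y_\sigma\given\cX')\le\hH(Y\given\cX')$ then yields $\hat{m}_0(\cZ,Y,n)-\hat{m}_0(\cX',Y,n)\le\hH(Y\given\cX')$. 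On the other hand, since $\cZ$ determines $Y$ we have $\hI(\cZ;Y)-\hI(\cX';Y)=\hH(Y\given\cX')$, so dividing the resulting difference of numerators by $\hH(Y)$ gives $\ourScore(\cZ;Y)\ge\ourScore(\cX';Y)$. Second, both $\cZ$ and $\cXp$ determine $Y$, hence $\hNMI(\cZ;Y)=\hNMI(\cXp;Y)=1$; and $\cXp\preceq\cZ$ (if $\cXp(i)\neq\cXp(j)$ then $\cX(i)\neq\cX(j)$ or $Y(i)\neq Y(j)$, and in either case $\cZ(i)\neq\cZ(j)$, using $\cX\preceq\cX'$ in the first case), so $\homo{\cXp}{\cZ}$ via $\sigma=\mathrm{id}$, and Prop.~\ref{prop:isospec}\ref{isospec:correction} gives $\hat{m}_0(\cXp,Y,n)\le\hat{m}_0(\cZ,Y,n)$, i.e.\ $\ourScore(\cXp;Y)\ge\ourScore(\cZ;Y)$. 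Chaining the two steps gives $\ourScore(\cX';Y)\le\ourScore(\cXp;Y)$, establishing the identity.

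The main obstacle is the first reduction step, that is, specializations $\cX'$ that do \emph{not} already determine $Y$: passing to $\cZ$ (or directly to $\cXp$) simultaneously raises $\hI(\,\cdot\,;Y)$ and the correction term, so plain monotonicity of $\hat{m}_0$ under homomorphism (Prop.~\ref{prop:isospec}) does not by itself fix the sign of the change in $\ourScore$. The resolution is the increment bound $\hat{m}_0(\cZ,Y,n)-\hat{m}_0(\cX',Y,n)\le\hH(Y\given\cX')$ obtained above from the chain rule, which exactly matches the gain $\hH(Y\given\cX')$ in $\hI(\,\cdot\,;Y)$. Once the identity is proved, the complexity claim is routine: $\cXp$ can be materialized in $O(n)$ time and has at most $\card{\domain{\cX}}\,\card{\domain{Y}}$ distinct values, so evaluating $\ourScore(\cXp;Y)$ with the $O(nk)$-time routine for a $k$-category labeling costs $O(n\,\card{\domain{\cX}}\,\card{\domain{Y}})$.
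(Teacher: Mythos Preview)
Your proof is correct and follows essentially the same two-step structure as the paper: first show that adjoining $Y$ to any labeling never decreases $\ourScore$ (your $\cX'\to\cZ$ step is precisely the paper's ``$R\mapsto R^+$ has positive gain'' lemma), then use $\cXp\preceq(\cX')^+$ together with $\hNMI=1$ on both sides to compare the correction terms. The only cosmetic difference is that you derive the positive-gain step via the chain rule $\hI(\cX',Y;Y_\sigma)=\hI(\cX';Y_\sigma)+\hI(Y;Y_\sigma\given\cX')$ and the bound $\hI(Y;Y_\sigma\given\cX')\le\hH(Y\given\cX')$, whereas the paper writes $\Io(R^+;Y)=\frac{1}{n!}\sum_\sigma\hH(R^+,Y_\sigma)-\hH(R^+,Y)$ and uses $\hH(R^+,Y)=\hH(R,Y)$ and $\hH(R^+,Y_\sigma)\ge\hH(R,Y_\sigma)$; unwinding either argument reduces to the same per-permutation inequality $\hH(Y\given \cX',Y_\sigma)\ge 0$.
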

\begin{proof}
	We start by showing that the $(\cdot)^+$ operation causes a positive gain in $\ourScore$, i.e., for an arbitrary labeling  $R$ it holds that $\ourScore(R^+;Y) \geq  \ourScore(R;Y)$.

	Let us define $\Io(R^+;Y)=\hI(R^+;Y)-\hat{E}_0(\hI(R^+,Y))$. It is then sufficient to show $\Io(R^+;Y) \geq \Io(R;Y)$. We have
	\begin{align*}
	\Io(R^+;Y)=&\left(  \hH(Y)+\hH(R^+)-\hH(R^+,Y) \right)\\
	-&\frac{1}{n!} \left( \sum_{\sigma \in S_n}(\hH(Y_\sigma)+\hH(R^+)-\hH(R^+,Y_\sigma)\right) \\
	=&\frac{1}{n!}\sum_{\sigma \in S_n} \hH(R^+,Y_\sigma) - \hH(R^+,Y) \\
	\geq & \frac{1}{n!} \sum_{\sigma \in S_n}\hH(R,Y_\sigma) - \hH(R,Y) = \Io(R;Y) \enspace,
	\end{align*}
	since $\hH(R^+,Y)=\hH(R \cup Y,Y)=\hH(R,Y)$, and from Prop.~\ref{prop:spec}\ref{isospec:correction},  for every $\sigma \in S_n$, $\hH(R^+,Y_\sigma) \geq \hH(R,Y_\sigma)$.
	
	To conclude, let $\cZ$ be an arbitrary specialization of  $\cX $. We have by definition of $\cZ$ and $\cZ^+$, that $\cXp \preceq \cZ^+$. Moreover, $\hNMI(\,\cdot\,;Y)=\hNMI(\{\,\cdot\,\} \cup \{Y\};Y)=1$. Thus
	\begin{align*}
	\ourScore(\cXp;Y)=&\hNMI(\cXp;Y)-\bo(\cXp,Y,n) \\ 
	=& 1-\bo(\cXp,Y,n) \\
	\geq & 1- \bo(\cZ^+,Y,n) \\
	= & \ourScore(\cZ^+;Y) \geq \ourScore(\cZ;Y) \enspace,
	\end{align*}
	as required. Here, the first inequality follows from Prop.~\ref{prop:spec}\ref{isospec:correction}, the second from the positive gain of $\cZ^+$ over $\cZ$. 
	
	For the complexity recall that $\bo(\cX,Y,n)$ can be computed in time $O(n \max\{\card{\domain{\cX}}, \card{\domain{Y}}\})$ (see appendix).
	The complexity follows from $\card{\domain{\cXp}} \leq \card{\domain{\cX}} \card{\domain{Y}}$.
\end{proof}

Note that the $\cX^+$ operation does not have to be computed explicitly because the non-zero marginal counts for $\cX^+$ can simply be obtained as the non-zero counts of the joint contingency table of $\cX$ and $Y$ (which has to be computed anyway for $\ourScore$; see appendix).

Intuitively, $\cX^+$ constitutes the most efficient specialization of $\cX$ in terms of growth in $\hNMI$ and $\bo$ (which is not necessarily attainable by a subset of input variables).
In contrast, the bounding function $\fbaro(\cX)=1-\bo(\cX,Y,n)$ of~\cite{mandros:2017:discovering} assumes that full information about the target can be attained (i.e., $\hNMI=1$) without ``paying" an increased $\bo$ term. The following proposition shows that this idea leads to an inferior bound.

\begin{proposition}\label{theo:maximizer}
	Let $\cX \subseteq \cI$ and $\Delta=\fbaro(\cX)-\fbarn(\cX)$. The following statements hold:
	\begin{enumerate}[label=\alph*)]
		\item $\Delta \geq 0$ for all datasets, i.e., $\fbarn(\cX) \leq \fbaro(\cX)$ 
		\item there are datasets $\D_{4l}$ for all $l \geq1$ s.t. $\Delta \in \Omega(1-\frac{1}{\log 2l})$
	\end{enumerate}
\end{proposition}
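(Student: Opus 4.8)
The plan is to prove the two statements separately, both resting on the characterization $\fbarn(\cX) = \ourScore(\cXp;Y) = 1 - \bo(\cXp,Y,n)$ from Theorem~\ref{lem:xy} and the comparison $\fbaro(\cX) = 1 - \bo(\cX,Y,n)$.

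For part a), since $\fbaro(\cX) - \fbarn(\cX) = \bo(\cXp,Y,n) - \bo(\cX,Y,n)$, the claim $\Delta \geq 0$ is exactly the statement that passing from $\cX$ to $\cXp = \{\cX\} \cup \{Y\}$ does not decrease the correction term. This is immediate from Proposition~\ref{prop:isospec}\ref{isospec:correction} applied with the specialization $\cX \preceq \cXp$ (a specialization is in particular a homomorphism via the identity permutation, and $\hH(Y)$ is fixed), so $\mo(\cX,Y,n) \leq \mo(\cXp,Y,n)$ and hence $\bo(\cX,Y,n) \leq \bo(\cXp,Y,n)$. I would also note here that this is the same positive-gain fact already extracted inside the proof of Theorem~\ref{lem:xy}.

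For part b), I would exhibit an explicit family of datasets witnessing a large gap. The natural choice is to take $\cX$ to be the empty set (or a single constant-labeled variable), so that $\fbaro(\cX) = 1 - \bo(\cX,Y,n) \approx 1$ because the correction term of a trivial labeling is essentially zero, while $\fbarn(\cX) = 1 - \bo(\{Y\},Y,n)$, and $\bo(\{Y\},Y,n) = \hat{E}_0(\hNMI(Y;Y_\sigma))/\dots$ is the expected normalized information between $Y$ and a random permutation of itself — which is bounded below by a quantity of order $1/\log(\text{domain of }Y)$ when $Y$ has a balanced large domain. Concretely I would construct $\D_{4l}$ with a target $Y$ taking on $\Theta(l)$ roughly equi-frequent values so that $\hH(Y) = \Theta(\log 2l)$ and $\mo(Y,Y,n) = \hI(Y;Y) \cdot (\text{something}) $ — more carefully, using the lower-bound side complementary to Proposition~\ref{prop:upperbound}, one gets $\bo(\cXp,Y,n) = \mo/\hH(Y) = \Omega(1/\log 2l)$, which after subtraction gives $\Delta = \bo(\cXp,Y,n) - \bo(\cX,Y,n) = \Omega(1 - 1/\log 2l)$ once we also check $\bo(\cX,Y,n) = o(1)$ for the trivial $\cX$. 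The factor-of-$4l$ sizing is presumably chosen so that counts come out as clean integers; I would pick the per-value multiplicity and number of values accordingly.

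The main obstacle is part b): I need a \emph{lower} bound on the correction term $\bo(\cXp,Y,n)$ for the specific construction, whereas the paper so far only supplies the \emph{upper} bound of Proposition~\ref{prop:upperbound}. I expect to handle this either by a direct computation of $\hat{E}_0(\hI(Y;Y_\sigma))$ for the balanced-target dataset (the permutation model with $Y$ against a shuffled copy of $Y$ is the classic ``mutual information of a random contingency table with fixed margins'' quantity, for which asymptotics of the form $\tfrac{(a-1)(b-1)}{2n\ln 2}\log n$ are known) or, more elementarily, by lower-bounding a single favorable permutation's contribution and arguing the average is comparable. Care is needed that the normalization by $\hH(Y) = \Theta(\log 2l)$ is what turns an $\Theta(\log l / \text{stuff})$ unnormalized term into an $\Omega(1)$-up-to-$1/\log 2l$ normalized gap, and that $\cX$ itself genuinely has a vanishing correction term so the subtraction leaves the stated $\Omega(1 - 1/\log 2l)$.
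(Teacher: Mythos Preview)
Your argument for part~a) is correct and essentially identical to the paper's: $\cX \preceq \cXp$ implies $\bo(\cX,Y,n) \leq \bo(\cXp,Y,n)$ by Prop.~\ref{prop:isospec}\ref{isospec:correction}, hence $\Delta = \bo(\cXp,Y,n) - \bo(\cX,Y,n) \geq 0$.

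Part~b), however, contains a real gap. With $\cX$ constant you indeed get $\bo(\cX,Y,n)=0$, so $\Delta = \bo(\cXp,Y,n)$ and you must show $\bo(\cXp,Y,n) \geq 1 - O(1/\log 2l)$, i.e.\ the correction term must be \emph{close to one}. Your stated target ``$\bo(\cXp,Y,n)=\Omega(1/\log 2l)$'' is the wrong quantity (it tends to $0$, not $1$), and the subsequent step ``which after subtraction gives $\Delta=\Omega(1-1/\log 2l)$'' simply does not follow from it. Your fallback of ``lower-bounding a single favourable permutation's contribution'' also cannot lower-bound an \emph{average}; a single term bounds the maximum, not the mean, from below.

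The paper avoids this difficulty altogether by \emph{not} taking $\cX$ trivial. It sets $Y$ to have $2l$ values each of multiplicity two on $n=4l$ points and takes $X$ to be a binary variable alternating $\text{a},\text{b}$, so that $X^{+}=\{X\}\cup\{Y\}$ is \emph{injective} (a key). For an injective labeling, every permutation carries full information about $Y$, so $\bo(X^{+},Y,4l)=1$ exactly and no lower bound on the correction term is needed. What remains is to \emph{upper}-bound $\bo(X,Y,4l)$, equivalently to lower-bound $\tfrac{1}{n!}\sum_{\sigma}\hH(Y\!\mid\!X_\sigma)/\hH(Y)$; this is easy because the average dominates the minimum, and the minimizing permutation (sorting the $\text{a}$'s and $\text{b}$'s into contiguous blocks) gives $\hH(Y\!\mid\!X_{\sigma^*})/\hH(Y)=\log l/\log 2l = 1-1/\log 2l$. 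The construction thus trades your hard lower bound on $\bo(\cXp)$ for a trivial equality, at the cost of making the other term nonzero but easily controlled.

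If you want to rescue your own construction, note that with $Y$ having $2l$ values of multiplicity two, every cell of $Y_\sigma$ has size two, so $\hH(Y\!\mid\!Y_\sigma)\leq 1$ for \emph{every} $\sigma$; this immediately yields $\bo(Y,Y,4l)\geq 1-1/\log 2l$ and closes the gap. But this is not the argument you wrote, and it hinges on choosing $Y$ with bounded cell sizes, which your proposal does not make explicit.
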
 
\begin{proof}
	a)
	\begin{align*}
	\fbarn(\cX)=&1-\bo(\cXp,Y,n) \\ 
	\leq  & 1 - \bo(\cX,Y,n) =\fbaro(\cX) \enspace,
	\end{align*}
	where the inequality holds from  Prop.~\ref{prop:spec}\ref{isospec:correction} and $\cX \preceq \cXp$. 
	
	b) For $l\geq1$ we construct a dataset $\D_{4l}$ with two variables $X \with [4l] \rightarrow \{\text{a},\text{b}\}$ and $Y \with \setupto{4l} \rightarrow \setupto{2l}$, with 
	\begin{align*}
	X(i) &= \begin{cases}
	\text{a} ,  & i \mod 2=1\\
	\text{b} ,  & i \mod 2=0
	\end{cases}
	\end{align*}
	and $Y(i)= \ceil{i/2}$  respectively (see Tab.~\ref{tab:xp}). We have 
	\begin{align*}
	\Delta&= 1-\bo(X,Y,4l)-1+\underbrace{\bo(X^+,Y,4l)}_{=\hH(Y\given X_\sigma^+)/\hH(Y)=0} \\
	&=\frac{1}{n!}\sum_{\sigma \in S_n} \hH(Y \given X_\sigma) / \hH(Y) \\
	&\geq \min_{\sigma \in S_n} \hH(Y \given X_\sigma) / \hH(Y) \enspace .
	\end{align*}
One can show that the minimum of the last step is attained by the permutation $\sigma^* \in S_n$ with
\begin{align*}
	\sigma^*(i) &= \begin{cases}
	2i-1, & i \in [1,2l]\\
	4l-2(4l-i),   & i \in [2l+1,4l]
	\end{cases} \enspace ,
\end{align*}
which corresponds to sorting the $\text{a}$ and $\text{b}$ values of $X$ (see Tab.~\ref{tab:xp}).
For this permutation the normalized conditional entropy evaluates to $1-1/\log(2l)$ as required. 
\end{proof}

\begin{table}[t]
	\begin{minipage}[c]{0.49\columnwidth}
		\centering
		\begin{tabular}{cccc}
			\toprule
			$X$ & $Y$      & $X^+$   & $X_{\sigma^*}$ \\ \midrule
			a   & 1        & \hphantom{0.5}(a,1)\hphantom{0.3}& a              \\
			b   & 1        & $($b,1$)$ & a              \\
			a   & 2        & $($a,2$)$ & a              \\
			b   & 2        & $($b,2$)$ & a              \\
			\multicolumn{4}{c}{$\vdots$}         \\ \bottomrule
		\end{tabular}
	\end{minipage}
	\hfill
	\begin{minipage}[c]{0.49\columnwidth}
		\centering
		\begin{tabular}{cccc}
			\toprule
			$X$ & $Y$      & $X^+$   & $X_{\sigma^*}$ \\ \midrule
			\multicolumn{4}{c}{$\vdots$}     \\
			a   & 2l-1        & $($a,2l-1$)$ & b              \\
			b   & 2l-1        & $($b,2l-1$)$ & b              \\
			a   & 2l      & $($a,2l$)$ & b              \\
			b   & 2l        & $($b,2l$)$ & b              \\ \bottomrule	\end{tabular}		
	\end{minipage}
	\caption{\textbf{Construction showing advantage of upper bound $1-\bo(X^+,Y,n)=0$ vs $1-\bo(X,Y,n) \geq 1-1/\log(n/2)$}, i.e., all specializations of $X$ that contain full information about $Y$ are injective (key) maps (see Prop.~\ref{theo:maximizer}).}
	\label{tab:xp}
\end{table}

Thus, we have established that $\fbarn$ is not only tighter than $\fbaro$, but even that the difference can be arbitrary close to $1$ (for an increasing domain size of $Y$). Put differently, their ratio, and thus the potential for additional pruning, is unbounded.

Computationally, $\fbarn(\cX)$ is more expensive than $\fbaro(\cX)$ by a factor of $\card{\domain{Y}}$.
In order to partially alleviate this increase, note that one can first check the pruning condition (line 8 of Alg.~\ref{alg:opus} or line 2 of Alg.~\ref{alg:greedy}) w.r.t. $\fbaro$ and only compute $\fbarn$ if that first check fails.
That is, whenever $\fbaro(\cX)$ is sufficient to prune a candidate $\cX$ we can still do so with the same computational complexity.
However, the additional evaluation of $\fbarn(\cX)$ can be a disadvantage in case it still does not allow to prune. 
This trade-off is evaluated in the following section. 

	\section{Evaluation}\label{sec:eval}
\begin{figure}[t]
	\centering
	\begin{minipage}[b]{\columnwidth}
		\centering
					\includegraphics[width=\linewidth]{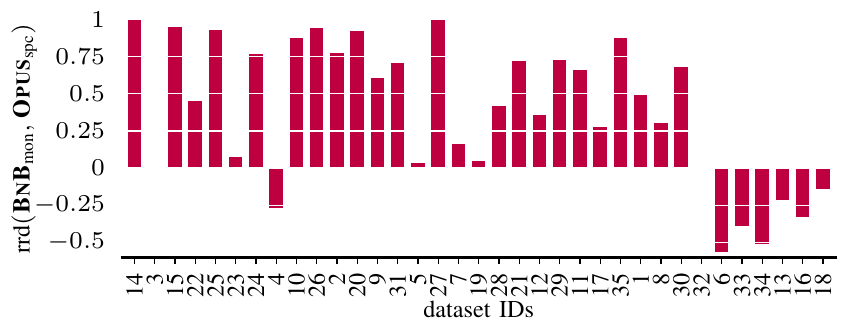}
	\end{minipage}
	\caption{\textbf{Evaluating the two branch-and-bound frameworks.} Relative time difference between methods $\methodA$ and $\methodC$. Positive (negative) numbers indicate that $\methodA$ ($\methodC$) is proportionally ``better".  Datasets are sorted in decreasing number of attributes.}
	\label{fig:ovsn}
\end{figure}

For ease of comparison to \cite{mandros:2017:discovering}, we consider datasets from the KEEL data repository~\cite{keel}. In particular, we use all classification datasets with $d \in [10,90]$ and no missing values, resulting in $35$ datasets with $52000$ and $30$  rows and columns on average, respectively. All metric attributes are discretized in 5 equal-frequency bins. The datasets are summarized in Table~\ref{tab:data}.  The runtimes are averaged over $3$ runs. All implementations are available online\!\!~\footnote{\giturl}\!.

We use two metrics for evaluation, the relative runtime difference and the relative difference in number of explored nodes. For methods A and B, the relative runtime difference on a particular dataset is computed as
\[
\text{rrd}(A, B)= \frac{(\tau_{A}-\tau_{B})}{\max(\tau_{A},\tau_{B})} \enspace,
\]
where $\tau_{A}$ and $\tau_{B}$ are the run times for A and B respectively. The $\text{rrd}$ score lies in $[-1,1]$, where positive (negative) values indicate that B is proportionally faster (slower). For example, a $\text{rrd}$ score of $0.5$ corresponds to a factor of $2$ speed-up, $0.66$ to a factor of $3$, $0.75$ to $4$ etc. The relative nodes explored difference $\text{rnd}$ is defined similarly. For both scores, we consider $(-0.5,0.5)$ to be a region of practical equivalence, i.e., a factor of $2$ of improvement is required to consider a method ``better".

\subsection{Branch-and-bound}\label{sec:bnb}

We first investigate the effect of the refined bounding function by comparing $\methodA$ and $\methodB$, i.e., Alg.~\ref{alg:opus} with $\fbarn$ and $\fbaro$ as bounding functions respectively. Last, we compare the proposed branch-and-bound framework $\methodA$ to the one of~\cite{mandros:2017:discovering}, which we call $\methodC$ (a combination of best-first search and refinement based on lexicographical order).  For a fair comparison, we set a common $\alpha$ value for all three methods on each dataset by determining the largest $\alpha$ value in increments of $0.05$ such that they terminate in less than $90$ minutes (see Tab.~\ref{tab:data}). 

In Fig.~\ref{fig:oesto} we present the comparison between $\methodA$ and $\methodB$. The left plot demonstrates that $\fbarn$ can lead to a considerable reduction of nodes explored over $\fbaro$. In particular, $15$ cases have at least a factor of $2$ reduction, $7$ have $4$, and there is one $1$ with $760$. For $20$ cases there is no practical difference. The plot validates that the potential for additional pruning is indeed unbounded~(Sec.~\ref{sec:oest}).  In terms of runtime efficiency (right plot), $\methodA$ is ``faster" in $70\%$ of the datasets. In more detail, and considering practical improvements, $12$ datasets have at least a factor of $2$ speedup, $6$ have $4$, $1$ has $266$, while only $2$ have a factor of $2$ slowdown. Moreover, we observe from the plot (where datasets are sorted in decreasing number of attributes) a clear correlation between number of attributes and efficiency: the $6$ out of $10$ datasets with the slowdown are also the ones with the lowest number of features. Overall, $\fbarn$ leads to a more effective optimization with branch-and-bound, and particularly for the higher-dimensional cases.

Following, we compare $\methodA$ to $\methodC$, presenting the results in Fig.~\ref{fig:ovsn}. The plot is quite evident that the new framework outperforms the baseline. There are $16$ cases with at least a speedup of $2$x, $10$ with $4$x, and there even exists a case with $2880$x. Moreover, since both $\methodB$ and $\methodC$ use the same bounding function $\fbaro$, the two plots Fig.~\ref{fig:ovsn} and Fig.~\ref{fig:oesto} (right) suggest that Alg~\ref{alg:opus} is a more effective branch-and-bound framework for the reliable fraction of information.

\subsection{Greedy}\label{sec:greedyeval}
\begin{figure}[t]
	\centering
	\begin{minipage}[b]{\columnwidth}
		\centering
						\includegraphics[width=\linewidth]{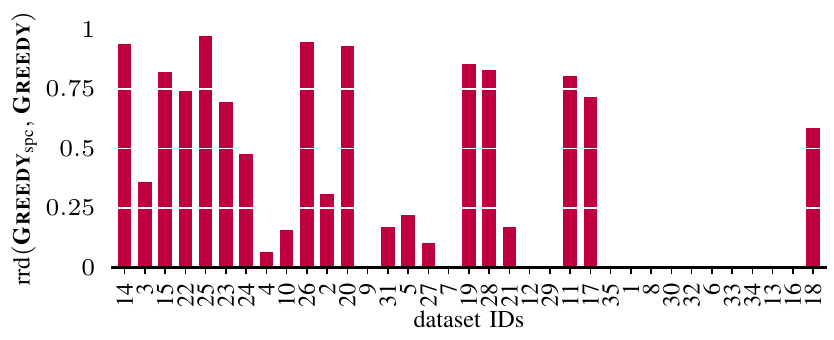}
	\end{minipage}
	\caption{\textbf{Evaluating $\fbarn$ for heuristic optimization.} Relative time difference between methods $\methodD$ and $\methodE$. Positive (negative) numbers indicate that $\methodD$ ($\methodE$) is proportionally ``better". The datasets are sorted in decreasing number of attributes.}
	\label{fig:oestg}
\end{figure}

We begin the evaluation with the performance of $\fbarn$ for heuristic search. We present the relative runtime differences of $\methodE$ and $\methodD$, i.e., Alg~\ref{alg:greedy} with and without $\fbarn$, in Fig.~\ref{fig:oestg} (raw results in Table~\ref{tab:data}). The plot shows that $\fbarn$ indeed improves the efficiency of the heuristic search, as we find that for $12$ datasets there is a speedup of at least a factor of $2$, and $8$ of at least a factor of $4$. 		

Next, we investigate the quality of the greedy results. Note that this is possible as we have access to the branch-and-bound results. In Fig.~\ref{fig:gvsoquality} we plot the differences between the $\ourScore$ score of the results obtained by greedy and branch-and-bound on each dataset (raw results in Table~\ref{tab:data}). Note that branch-and-bound uses the same $\alpha$ values as with the experiments in Sec~\ref{sec:bnb}, and that we only plot the non-zero differences in the two plots, left for $\alpha=1$, i.e, optimal solutions, and right for $\alpha<1$, i.e., approximate solutions with guarantees. 

At a first glance, we observe that there is no difference in $21$ out of $35$ cases considered, $7$ where greedy is better (this of course on the datasets where $\alpha <1$), and $7$ for branch-and-bound. Out of the $21$ cases where the two algorithms have equal $\ourScore$, $16$ of them have $\alpha=1$, i.e., the greedy algorithm is optimal roughly $45\%$ of the time. Moreover, the cases where branch-and-bound is better is only by a small margin, $0.03$ on average, while greedy ``wins" by $0.1$ on average. Another observation from the right plot of Fig.~\ref{fig:gvsoquality} is that the largest differences between the two algorithms is for the $3$ datasets where the lowest $\alpha$ values where used, i.e., $0.05, 0.1,$ and $0.35$. 

Lastly in Fig~\ref{fig:ovsbtime}, we consider the relative runtime difference between the greedy algorithm and branch-and-bound, i.e., $\methodD$ and $\methodA$. As expected, the greedy algorithm is significantly faster in the majority of cases. There are, however, $4$ cases where branch-and-bound terminates much faster, which also happen to  coincide with more aggressive $\alpha$ values for branch-and-bound. 

\begin{figure}[t]
	\centering
	\begin{minipage}[b]{0.49\columnwidth}
		\centering
							\includegraphics[width=\linewidth]{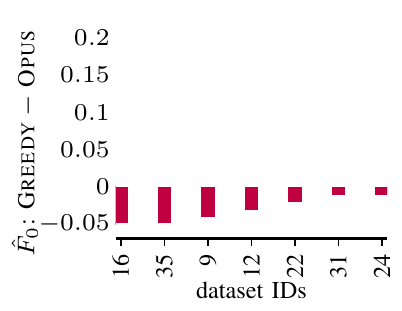}
	\end{minipage}
	\begin{minipage}[b]{0.49\columnwidth}
		\centering
				\includegraphics[width=\linewidth]{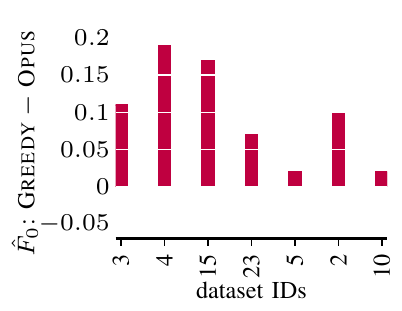}
	\end{minipage}
	\caption{\textbf{Evaluating the heuristic algorithm for result quality.} \textbf{Left}: difference in $\ourScore$ between methods $\methodD$ and $\methodA$ (i.e., $\ourScore(\cXo_{grd};Y)-\ourScore(\cXo_{bnb};Y)$ where $\cXo_{grd}$ and $\cXo_{bnb}$ are the solutions of Alg.~\ref{alg:greedy} and~\ref{alg:opus} respectively)  for $\alpha=1$. Since $\alpha=1$, the negative values close to $0$ indicate that Alg.~\ref{alg:greedy} retrieves nearly optimal solutions. Data are sorted in increasing quality difference. \textbf{Right}: difference for $\alpha<1$. Positive values indicate that Alg.~\ref{alg:greedy} retrieves better solutions when Alg.~\ref{alg:opus} uses guarantees $\alpha<1$.   Data are sorted in increasing $\alpha$ values.}
	\label{fig:gvsoquality}
\end{figure}

\begin{figure}[t]
	\centering
	\begin{minipage}[b]{\columnwidth}
		\centering
									\includegraphics[width=\linewidth]{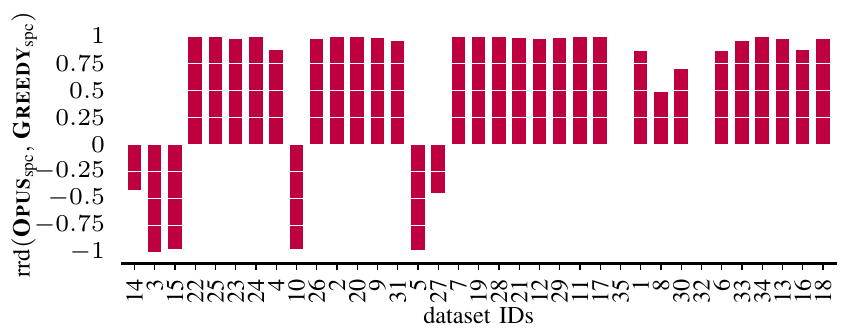}
	\end{minipage}
	\caption{\textbf{Evaluating the heuristic algorithm in terms of running time.} Relative time difference between methods $\methodD$ and $\methodA$. Positive (negative) numbers indicate that $\methodD$ ($\methodA$) is proportionally ``better". Datasets are ordered in decreasing number of attributes.}
	\label{fig:ovsbtime}
\end{figure}

\begin{sidewaystable*}[t]
	\small
	\centering
	\caption{Datasets used in Section~\ref{sec:eval} along with the raw results produced from the evaluation.}
	\label{tab:data}%
	\rowcolors{2}{gray!25}{white}
	\begin{tabular}{rlrrrrrrrrrrrrr}
		&       &       &       &       &        \multicolumn{6}{c}{time(s)}           & \multicolumn{2}{c}{nodes explored} &  \multicolumn{2}{c}{$\ourScore$} \\ \toprule
		ID & dataset & \#rows & \#attr & \#classes & $\alpha$ & $\methodA$ & $\methodB$ & $\methodC$ & $\methodD$ & $\methodE$ & $\methodA$ & $\methodB$ & $\methodA$ & $\methodD$ \\ \bottomrule
		1 & \textit{australian} & $690$   & $14$    & $2$     & $1.00$     & $7.0$     & $8.3$   & $13.7$  & $1.0$     & $1.0$     & $4190$  & $5388$  & $0.54$  & $0.54$ \\
		2 & \textit{chess} & $3196$  & $36$    & $2$     & $0.75$  & $192.1$ & $545.9$ & $828.4$ & $2.5$   & $3.6$   & $69713$ & $252766$ & $0.77$  & $0.87$ \\
		3& \textit{coil2000} & $9822$  & $85$    & $2$     & $0.05$  & $1.0$     & $1.0$     & $1.0$     & $189.1$ & $294.4$ & $86$    & $86$    & $0.06$  & $0.17$ \\
		4 & \textit{connect-4} & $67557$ & $42$    & $3$     & $0.10$   & $1236.8$ & $951.5$ & $914.3$ & $164.3$ & $174.8$ & $36183$ & $37176$ & $0.10$   & $0.29$ \\
		5 & \textit{fars}  & $100968$ & $29$    & $8$     & $0.65$  & $3.0$     & $7.0$     & $3.1$   & $93.9$  & $119.8$ & $45$    & $183$   & $0.66$  & $0.68$ \\
		6 & \textit{flare} & $1066$  & $11$    & $6$     & $1.00$     & $6.8$   & $3.2$   & $3.0$     & $1.0$     & $1.0$     & $2011$  & $2048$  & $0.65$  & $0.65$ \\
		7 & \textit{german} & $1000$  & $20$    & $2$    & $1.00$     & $931.5$ & $960.1$ & $1104.9$ & $1.0$     & $1.0$     & $216250$ & $284397$ & $0.21$  & $0.21$ \\
		8 & \textit{heart} & $270$   & $13$    & $2$    & $1.00$     & $1.9$   & $1.9$   & $2.7$   & $1.0$     & $1.0$     & $2275$  & $2758$  & $0.42$  & $0.42$ \\
		9 & \textit{ionosphere} & $351$   &$33$    & $2$    & $1.00$     & $46.4$  & $47.6$  & $116.4$ & $1.0$     & $1.0$     & $48094$ & $53784$ & $0.62$  & $0.58$ \\
		10 & \textit{kddcup} & $494020$ & $41$    & $23$    & $0.90$   & $18.1$  & $37.8$  & $142.3$ & $520.2$ & $616.4$ & $69$    & $232$   & $0.97$  & $0.99$ \\
		11 & \textit{letter} & $20000$ & $16$    & $26$    & $1.00$     & $659.5$ & $1501.0$  & $1915.5$ & $3.8$   & $19.1$  & $4894$  & $15300$ & $0.60$   & $0.60$ \\
		12 & \textit{lymphography} & $148$   & $18$    & $4$    & $1.00$     & $31.2$  & $20.2$  & $48.4$  & $1.0$     & $1.0$     & $23971$ & $38319$ & $0.48$  & $0.45$ \\
		13 & \textit{magic} & $19020$ & $10$    &$2$     & $1.00$     & $38.5$  & $31.6$  & $30.4$  & $1.3$   & $1.3$   & $1012$  & $1017$  & $0.43$  & $0.43$ \\
		14 & \textit{move-libras} & $360$   & $90$   & $15$    & $0.50$   & $1.0$     & $266.6$ & $2881.7$ & $1.7$   & $25.9$  & $213$   & $163630$ & $0.32$  & $0.32$ \\
		15 & \textit{optdigits} & $5620$  & $64$    & $104$    & $0.35$  & $1.0$     & $4.3$   & $18.9$  & $25.1$  & $139.3$ & $105$   & $888$   & $0.36$  & $0.53$ \\
		16 & \textit{pageblocks} & $5472$  & $10$    & $5$    & $1.00$     & $7.4$   & $5.2$   & $5.0$     & $1.0$     & $1.0$     & $831$   & $859$  & $0.65$  & $0.60$ \\
		17 & \textit{penbased} & $10992$ & $16$    & $10$    & $1.00$     & $233.6$ & $277.5$ & $320.9$ & $1.6$   & $5.6$   & $8099$  & $13486$ & $0.75$  & $0.75$ \\
		18 & \textit{poker}  & $1025010$ & $10$    & $10$    & $1.00$     & $2594.7$ & $1705.2$ & $2247.4$ & $86.0$    & $205.3$ & $908$   & $908$   & $0.57$  & $0.57$ \\
		19 & \textit{ring}  & $7400$  & $20$    & $2$    & $1.00$     & $1393.9$ & $1197.3$ & $1456.7$ & $1.1$   & $7.4$   & $60460$ & $60460$ & $0.29$  & $0.29$ \\
		20 & \textit{satimage} & $6435$  & $36$    & $7$     & $0.80$   & $173.8$ & $954.4$ & $2054.2$ & $2.0$     & $27.6$  & $24622$ & $154287$ & $0.74$  & $0.74$ \\
		21 & \textit{segment} & $2310$  & $19$    & $7$    & $1.00$     & $39.1$  & $53.3$  & $137.2$ & $1.0$     & $1.2$   & $8815$  & $19159$ & $0.84$  & $0.84$ \\
		22 & \textit{sonar} & $208$  & $60$    & $2$     & $1.00$     & $403.5$ & $431.9$ & $730.8$ & $1.0$     & $3.8$   & $472554$ & $530478$ & $0.34$  & $0.32$ \\
		23 & \textit{spambase} & $4597$  & $57$    & $2$     & $0.55$  & $515.6$ & $574.6$ & $555.4$ & $15.4$  & $50.1$  & $167695$ & $212147$ & $0.54$  & $0.60$ \\
		24 & \textit{spectfheart} & $267$   & $44$   & $2$    & $1.00$     & $171.1$ & $369.3$ & $724.7$ & $1.0$     & $1.9$   & $155364$ & $335365$ & $0.23$  & $0.22$ \\
		25 & \textit{splice} & $3190$ & $60$    & $3$     & $0.65$  & $92.3$  & $851.0$   & $1193.0$  & $1.5$   & $46.9$  & $36052$ & $315125$ & $0.65$  & $0.65$ \\
		26 & \textit{texture} & $5500$  & $40$    & $11$    & $0.80$   & $62.9$  & $480.3$ & $1000.8$ & $2.1$   & $36.6$  & $10835$ & $109878$ & $0.76$  & $0.76$ \\
		27 & \textit{thyroid} & $7200$  & $21$    & $3$     & $0.50$   & $1.0$     & $5.8$   & $110.4$ & $1.8$   & $2.0$     & $247$   & $1475$  & $0.50$   & $0.50$ \\
		28 & \textit{twonorm} & $7400$  & $20$   & $2$     & $1.00$     & $1332.2$ & $1162.4$ & $2274.4$ & $1.3$   & $7.4$   & $60460$ & $60460$ & $0.42$  & $0.42$ \\
		29 & \textit{vehicle} & $846$   & $18$    & $4$     & $1.00$     & $38.2$  & $53.2$  & $137.9$ & $1.0$     & $1.0$     & $10670$ & $23462$ & $0.48$  & $0.48$ \\
		30 & \textit{vowel} & $990$  & $13$    & $11$    & $1.00$     & $3.2$   & $5.1$   & $9.8$   & $1.0$     & $1.0$     & $590$   & $1622$  & $0.45$  & $0.45$ \\
		31 & \textit{wdbc}  & $569$   & $30$   & $2$     & $1.00$     & $19.9$  & $38.2$  & $67.0$    & $1.0$     & $1.2$   & $18564$ & $33862$ & $0.76$  & $0.75$ \\
		32 & \textit{wine}  & $178$   & $13$    & $3$     & $1.00$     & $1.0$     & $1.0$     & $1.0$     & $1.0$     & $1.0$     & $199$   & $378$   & $0.71$  & $0.71$ \\
		33 & \textit{wine-red} & $1599$  & $11$    & $11$    & $1.00$     & $18.7$  & $10.3$  & $11.5$  & $1.0$     & $1.0$     & $1481$  & $1698$  & $0.20$   & $0.20$ \\
		34 & \textit{wine-white} & $4898$  & $11$    & $11$    & $1.00$     & $77.4$  & $36.2$  & $38.1$  & $1.0$     & $1.0$     & $1881$  & $2011$  & $0.19$  & $0.19$ \\
		35 & \textit{zoo}   & $101$   & $15$    & $7$    & $1.00$     & $1.0$     & $4.1$   & $7.7$   & $1.0$     & $1.0$     & $773$   & $5724$  & $0.80$   & $0.75$ \\ \bottomrule
		\textbf{Avg.} &       &  &  &    &  & $296$ & $360$ & $603$ & $32$ & $51$& $41434$ & $78309$ & $0.51$ & $0.53$ \\ 
		\bottomrule
	\end{tabular}%
\end{sidewaystable*}%

	\section{Conclusion}\label{sec:conc}
We investigated the algorithmic aspect of discovering dependencies in data using the reliable fraction of information, where we proved the NP-hardness of the problem and derived a refined bounding function for more effective optimization. Moreover, we considered an improved branch-and-bound algorithm and explored the aspects of heuristic optimization. The experimental evaluation showed that the refined bounding function is very effective for both types of optimization, the proposed branch-and-bound framework outperforms the baseline, and that the greedy algorithm provides solutions that are nearly optimal.

While the given reduction from set cover can be extended to show that, unless P=NP, no fully polynomial time approximation scheme exists, the possibility of weaker approximation guarantees remains. 
In particular, the strong empirical performance of the greedy algorithm hints that $\ourScore$ could have a certain structure favored by the greedy algorithm, e.g., some weaker form of submodularity (we remind that $\ourScore$ is neither submodular nor monotone). For instance, we could explore ideas from Horel and Singer~\cite{horel:2016:sub} where a monotone function is e-approximately submodular if it can be bounded by a submodular function within $1 \pm e$. Another idea is that of restricted submodularity for monotone functions~\cite{du:2008:restricted}, where a function  is submodular over a subset of the search space. It might be that the greedy algorithm only considers candidates where $\ourScore$ is submodular.

Furthermore, the proposed bounding function is likely to be applicable to a larger selection of corrected-for-chance dependence measures, and a general framework for maximizing reliable measures could be established.


\bibliographystyle{IEEEtran}

\begin{thebibliography}{10}
	\providecommand{\url}[1]{#1}
	\csname url@samestyle\endcsname
	\providecommand{\newblock}{\relax}
	\providecommand{\bibinfo}[2]{#2}
	\providecommand{\BIBentrySTDinterwordspacing}{\spaceskip=0pt\relax}
	\providecommand{\BIBentryALTinterwordstretchfactor}{4}
	\providecommand{\BIBentryALTinterwordspacing}{\spaceskip=\fontdimen2\font plus
		\BIBentryALTinterwordstretchfactor\fontdimen3\font minus
		\fontdimen4\font\relax}
	\providecommand{\BIBforeignlanguage}[2]{{%
			\expandafter\ifx\csname l@#1\endcsname\relax
			\typeout{** WARNING: IEEEtran.bst: No hyphenation pattern has been}%
			\typeout{** loaded for the language `#1'. Using the pattern for}%
			\typeout{** the default language instead.}%
			\else
			\language=\csname l@#1\endcsname
			\fi
			#2}}
	\providecommand{\BIBdecl}{\relax}
	\BIBdecl
	
	\bibitem{ramakrishnan:2000:dbms}
	R.~Ramakrishnan and J.~Gehrke, \emph{Database Management Systems},
	2nd~ed.\hskip 1em plus 0.5em minus 0.4em\relax McGraw-Hill Higher Education,
	2000.
	
	\bibitem{song:2012:bahsic}
	L.~Song, A.~Smola, A.~Gretton, J.~Bedo, and K.~Borgwardt, ``Feature selection
	via dependence maximization,'' \emph{JMLR}, vol.~13, pp. 1393--1434, 2012.
	
	\bibitem{ziarko:2002:rough}
	W.~Ziarko, ``Rough set approaches for discovery of rules and attribute
	dependencies,'' \emph{Handbook of data mining and knowledge discovery}, pp.
	328--338, 2002.
	
	\bibitem{ghiringhelli:2015:big}
	L.~M. Ghiringhelli, J.~Vybiral, S.~V. Levchenko, C.~Draxl, and M.~Scheffler,
	``Big data of materials science: Critical role of the descriptor,''
	\emph{Physical review letters}, vol. 114, no.~10, p. 105503, 2015.
	
	\bibitem{ouyang:2017:sisso}
	R.~Ouyang, S.~Curtarolo, E.~Ahmetcik, M.~Scheffler, and L.~Ghiringhelli,
	``Sisso: a compressed-sensing method for systematically identifying efficient
	physical models of materials properties,'' no. 1710.03319, 2017.
	
	\bibitem{cavallo:1987:fraction}
	R.~Cavallo and M.~Pittarelli, ``The theory of probabilistic databases,'' in
	\emph{Proceedings of the 13th International Conference on Very Large Data
		Bases (VLDB), Brighton, UK}, 1987, pp. 71--81.
	
	\bibitem{giannella:2004:fraction}
	C.~Giannella and E.~L. Robertson, ``On approximation measures for functional
	dependencies,'' \emph{Information Systems}, vol.~29,
	2004.
	
	\bibitem{reimherr:2013:fraction}
	M.~Reimherr and D.~L. Nicolae, ``On quantifying dependence: A framework for
	developing interpretable measures,'' \emph{Statistical Science}, vol.~28,
	no.~1, pp. 116--130, 2013.
	
	\bibitem{romano:2016:chance}
	S.~Romano, N.~X. Vinh, J.~Bailey, and K.~Verspoor, ``A framework to adjust
	dependency measure estimates for chance,'' in \emph{Proceedings of the SIAM
		International Conference on Data Mining (SDM), Miami, FL}.\hskip 1em plus
	0.5em minus 0.4em\relax SIAM, 2016.
	
	\bibitem{roulston1999estimating}
	M.~S. Roulston, ``Estimating the errors on measured entropy and mutual
	information,'' \emph{Physica D: Nonlinear Phenomena}, vol. 125, no.~3, pp.
	285--294, 1999.
	
	\bibitem{guyon:2003:featsel}
	I.~Guyon and A.~Elisseeff, ``An introduction to variable and feature
	selection,'' \emph{J. Mach. Learn. Res.}, vol.~3, pp. 1157--1182, Mar. 2003.
	
	\bibitem{huhtala:1999:tane}
	Y.~Huhtala, J.~K{\"a}rkk{\"a}inen, P.~Porkka, and H.~Toivonen, ``Tane: An
	efficient algorithm for discovering functional and approximate
	dependencies,'' \emph{The Computer Journal}, vol.~42, no.~2, pp. 100--111,
	1999.
	
	\bibitem{kruse:2018:afd}
	S.~Kruse and F.~Naumann, ``Efficient discovery of approximate dependencies,''
	\emph{Proc. VLDB Endow.}, vol.~11, no.~7, pp. 759--772, Mar. 2018.
	
	\bibitem{mandros:2017:discovering}
	P.~Mandros, M.~Boley, and J.~Vreeken, ``Discovering reliable approximate
	functional dependencies,'' in \emph{Proceedings of the 23rd ACM SIGKDD
		International Conference on Knowledge Discovery and Data Mining}, ser. KDD
	'17.\hskip 1em plus 0.5em minus 0.4em\relax ACM, 2017.
	
	\bibitem{nguyen:2010:chancejournal}
	N.~X. Vinh, J.~Epps, and J.~Bailey, ``Information theoretic measures for
	clusterings comparison: Variants, properties, normalization and correction
	for chance,'' \emph{Journal of Machine Learning Research}, vol.~11, no. Oct,
	pp. 2837--2854, 2010.
	
	\bibitem{lancaster:1969:chi}
	H.~Lancaster, \emph{The chi-squared distribution}, ser. Wiley series in
	probability and mathematical statistics. Probability and mathematical
	statistics.\hskip 1em plus 0.5em minus 0.4em\relax Wiley, 1969.
	
	\bibitem{romano:2014:smi}
	S.~Romano, J.~Bailey, X.~V. Nguyen, and K.~Verspoor, ``Standardized mutual
	information for clustering comparisons: One step further in adjustment for
	chance.'' in \emph{Proceedings of the 31st International Conference on
		Machine Learning (ICML), Beijing, China}, 2014, pp. 1143--1151.
	
	\bibitem{cover}
	T.~M. Cover and J.~A. Thomas, \emph{Elements of Information Theory}.\hskip 1em
	plus 0.5em minus 0.4em\relax New York, NY, USA: Wiley-Interscience, 1991.
	
	\bibitem{webb:1995:opus}
	G.~I. Webb, ``Opus: An efficient admissible algorithm for unordered search,''
	\emph{Journal of Artificial Intelligence Research}, vol.~3, pp. 431--465,
	1995.
	
	\bibitem{brown:2012:featsel}
	G.~Brown, A.~Pocock, M.-J. Zhao, and M.~Luj\'{a}n, ``Conditional likelihood
	maximisation: A unifying framework for information theoretic feature
	selection,'' \emph{J. Mach. Learn. Res.}, vol.~13, pp. 27--66, Jan. 2012.
	
	\bibitem{feige:2011:maximizing}
	U.~Feige, V.~S. Mirrokni, and J.~Vondrak, ``Maximizing non-monotone submodular
	functions,'' \emph{SIAM Journal on Computing}, vol.~40, no.~4, pp.
	1133--1153, 2011.
	
	\bibitem{das:2011:subratio}
	A.~Das and D.~Kempe, ``Submodular meets spectral: Greedy algorithms for subset
	selection, sparse approximation and dictionary selection,'' in
	\emph{Proceedings of the 28th International Conference on International
		Conference on Machine Learning}, ser. ICML'11, 2011, pp. 1057--1064.
	
	\bibitem{bian:2017:approximate}
	A.~A. Bian, J.~M. Buhmann, A.~Krause, and S.~Tschiatschek, ``Guarantees for
	greedy maximization of non-submodular functions with applications,'' in
	\emph{International Conference on Machine Learning (ICML)}, 2017.
	
	\bibitem{keel}
	J.~Alcal\`a-Fdez, A.~Fern\`andez, J.~Luengo, J.~Derrac, and S.~Garc\`ia, ``Keel
	data-mining software tool: Data set repository, integration of algorithms and
	experimental analysis framework.'' \emph{Multiple-Valued Logic and Soft
		Computing}, vol.~17, no. 2-3, pp. 255--287, 2011.
	
	\bibitem{horel:2016:sub}
	T.~Horel and Y.~Singer, ``Maximization of approximately submodular functions,''
	in \emph{Advances in Neural Information Processing Systems 29}, D.~D. Lee,
	M.~Sugiyama, U.~V. Luxburg, I.~Guyon, and R.~Garnett, Eds.\hskip 1em plus
	0.5em minus 0.4em\relax Curran Associates, Inc., 2016, pp. 3045--3053.
	
	\bibitem{du:2008:restricted}
	D.-Z. Du, R.~L. Graham, P.~M. Pardalos, P.-J. Wan, W.~Wu, and W.~Zhao,
	``Analysis of greedy approximations with nonsubmodular potential functions,''
	in \emph{Proceedings of the Nineteenth Annual ACM-SIAM Symposium on Discrete
		Algorithms}, ser. SODA '08, 2008, pp. 167--175.
	
\end{thebibliography}

\section*{Appendix}
\newcommand{\ppm}[1]{\hat{p}_{o}(#1)}
Here, we recap how to efficiently compute the correction term $\mo(\cX,Y,n)$ for an attribute set $\cX \subseteq \cI$ and target $Y$ in $\D_n$ (cf. \cite{mandros:2017:discovering, romano:2014:smi}).
Let the observed domains of $\X$ and $Y$ be $\domain{\X}=\lbrace \vx_1,\dots, \vx_R \rbrace$ and $\domain{Y}=\lbrace y_1,\dots, y_C \rbrace$, respectively.
We define shortcuts for the observed marginal counts $a_i=c(\X=\vx_i)$ and $b_j=c(Y=y_j)$ as well as for the joint counts $c_{i,j}=c(\X=\vx_i, Y=y_j)$. The \defemph{contingency table} $\vc$ for $\cX$ and $Y$ is then the complete joint count configuration $\vc=\{c_{i,j} \with 1\leq i \leq R, 1 \leq j \leq C\}$. The empirical mutual information for $\cX$ and $Y$ can then be computed as:
\begin{equation}
	\hI(\cX,Y)=\hI(\vc)=\sum_{i=1}^{R} \sum_{j=1}^{C}\frac{c_{ij}}{n} \log \frac{ c_{ij} n} {a_i b_j}
\end{equation}

Each $\sigma \in S_n$ results in a contingency table $\vc^{\sigma}$. 
We denote with $\cT=\{\vc^{\sigma} \with \sigma \in S_n\}$ the set of all such contingency tables. 
Crucially, all these tables have the same marginal counts $a_i,b_j$, $i \in [1, R], j \in [1, C]$. Hence, one can rewrite $\mo$ as 
\begin{equation}
\mo(\X,Y,n)
= \sum_{\vc^{\sigma} \in \mathcal{T}}\ppm{\vc^{\sigma}}\sum_{i=1}^{R} \sum_{j=1}^{C}\frac{c^{\sigma}_{ij}}{n} \log \frac{ c^{\sigma}_{ij} n} {a_i b_j} 
\end{equation}
were $\ppm{\vc}$ is the probability of contingency table $\vc \in \mathcal{T}$.
This allows one to re-order terms to have a per cell contribution to $\mo$, rather than per contingency table $\vc \in \mathcal{T}$, i.e., 
\begin{equation}
\mo(\X,Y,n) =  \sum_{i=1}^{R} \sum_{j=1}^{C} \sum_{k=0}^{n}\ppm{c^{\sigma}_{ij}=k} \frac{k}{n} \log \frac{kn} {a_i b_j} \enspace .
\end{equation}
The individual empirical counts $c^{\sigma}_{ij}$ are then distributed hypergeometrically, i.e.,  
\[
\ppm{c^{\sigma}_{ij}=k}= \binom{b_i}{k}\binom{n-b_i}{a_j-k}/\binom{n}{a_j} \enspace . 
\]
These probabilities can be computed efficiently in an incremental manner by noting that they are non-zero only for $k$ between $\max(0,a_i+b_j-n)$ and $\min(a_i,b_j)$ and by using the hypergeometric recurrence formula.

\end{document}